\documentclass[draftcls,onecolumn]{IEEEtran}
\usepackage{xr}
\usepackage{cite}
\usepackage{amsmath}
\usepackage[colorlinks=true,linkcolor=black,citecolor=black,urlcolor=black]{hyperref}
\usepackage{cases}
\usepackage[utf8]{inputenc}
\usepackage[english]{babel}
\usepackage{footnote}
\usepackage{booktabs}

\usepackage{bm}

\usepackage[mathscr]{eucal}
\usepackage{epsfig,epsf,psfrag}
\usepackage{amssymb,amsmath,amsfonts,latexsym}
\usepackage{amsmath,graphicx,bm,xcolor,url}
\usepackage[caption=false]{subfig} 
\usepackage{fixltx2e}
\usepackage{array}
\usepackage{verbatim}
\usepackage{bm}
\usepackage{algpseudocode}
\usepackage{algorithm}
\usepackage{verbatim}
\usepackage{textcomp}
\usepackage{mathrsfs}
\usepackage{epstopdf}
\usepackage{relsize}
\usepackage{cleveref} 
\usepackage{subfig}
 \usepackage{amsthm}

 
\catcode`~=11 \def\UrlSpecials{\do\~{\kern -.15em\lower .7ex\hbox{~}\kern .04em}} \catcode`~=13 

\allowdisplaybreaks[3]


\newcommand{\calD}{\mathcal{D}}

\newcommand{\calK}{\mathcal{K}}
\newcommand{\calL}{\mathcal{L}}

\newcommand{\calN}{\mathcal{N}}

\newcommand{\calP}{\mathcal{P}}

\newcommand{\ba}{\mathbf{a}}
\newcommand{\bA}{\mathbf{A}}

\newcommand{\bB}{\mathbf{B}}

\newcommand{\bI}{\mathbf{I}}

\newcommand{\bs}{\mathbf{s}}

\newcommand{\bu}{\mathbf{u}}

\newcommand{\bv}{\mathbf{v}}

\newcommand{\bW}{\mathbf{W}}
\newcommand{\bx}{\mathbf{x}}
\newcommand{\bX}{\mathbf{X}}
\newcommand{\by}{\mathbf{y}}

\newcommand{\bz}{\mathbf{z}}



\newcommand{\bbE}{\mathbb{E}}

\newcommand{\bbN}{\mathbb{N}}

\newcommand{\bbR}{\mathbb{R}}



\DeclareMathAlphabet{\mathbsf}{OT1}{cmss}{bx}{n}
\DeclareMathAlphabet{\mathssf}{OT1}{cmss}{m}{sl}

\DeclareSymbolFont{bsfletters}{OT1}{cmss}{bx}{n}  
\DeclareSymbolFont{ssfletters}{OT1}{cmss}{m}{n}
\DeclareMathSymbol{\bsfGamma}{0}{bsfletters}{'000}
\DeclareMathSymbol{\ssfGamma}{0}{ssfletters}{'000}
\DeclareMathSymbol{\bsfDelta}{0}{bsfletters}{'001}
\DeclareMathSymbol{\ssfDelta}{0}{ssfletters}{'001}
\DeclareMathSymbol{\bsfTheta}{0}{bsfletters}{'002}
\DeclareMathSymbol{\ssfTheta}{0}{ssfletters}{'002}
\DeclareMathSymbol{\bsfLambda}{0}{bsfletters}{'003}
\DeclareMathSymbol{\ssfLambda}{0}{ssfletters}{'003}
\DeclareMathSymbol{\bsfXi}{0}{bsfletters}{'004}
\DeclareMathSymbol{\ssfXi}{0}{ssfletters}{'004}
\DeclareMathSymbol{\bsfPi}{0}{bsfletters}{'005}
\DeclareMathSymbol{\ssfPi}{0}{ssfletters}{'005}
\DeclareMathSymbol{\bsfSigma}{0}{bsfletters}{'006}
\DeclareMathSymbol{\ssfSigma}{0}{ssfletters}{'006}
\DeclareMathSymbol{\bsfUpsilon}{0}{bsfletters}{'007}
\DeclareMathSymbol{\ssfUpsilon}{0}{ssfletters}{'007}
\DeclareMathSymbol{\bsfPhi}{0}{bsfletters}{'010}
\DeclareMathSymbol{\ssfPhi}{0}{ssfletters}{'010}
\DeclareMathSymbol{\bsfPsi}{0}{bsfletters}{'011}
\DeclareMathSymbol{\ssfPsi}{0}{ssfletters}{'011}
\DeclareMathSymbol{\bsfOmega}{0}{bsfletters}{'012}
\DeclareMathSymbol{\ssfOmega}{0}{ssfletters}{'012}


\newcommand{\balpha}{\bm{\alpha}}

\newcommand{\bmeta}{\bm{\eta}}









\newtheorem{theorem}{Theorem} 
\newtheorem{lemma}{Lemma}

\newtheorem{definition}{Definition} 
\newtheorem{remark}{Remark}

\newcommand{\qednew}{\nobreak \ifvmode \relax \else
      \ifdim\lastskip<1.5em \hskip-\lastskip
      \hskip1.5em plus0em minus0.5em \fi \nobreak
      \vrule height0.75em width0.5em depth0.25em\fi}




\begin{document}
    
\title{Projected Gradient Descent Algorithms for \\Solving Nonlinear Inverse Problems with Generative Priors}

\author{Zhaoqiang Liu, Jun Han

\thanks{
Z.~Liu is with the Department of Computer Science, National University of Singapore (email: \url{dcslizha@nus.edu.sg}). 

J.~Han is with Platform and Content Group, Tencent (email: \url{junhanjh@tencent.com}).}}

\maketitle

\begin{abstract}
    In this paper, we propose projected gradient descent (PGD) algorithms for signal estimation from noisy nonlinear measurements. We assume that the unknown $p$-dimensional signal lies near the range of an $L$-Lipschitz continuous generative model with bounded $k$-dimensional inputs. In particular, we consider two cases when the nonlinear link function is either unknown or known. For unknown nonlinearity, similarly to~\cite{liu2020generalized}, we make the assumption of sub-Gaussian observations and propose a linear least-squares estimator. We show that when there is no representation error and the sensing vectors are Gaussian, roughly $O(k \log L)$ samples suffice to ensure that a PGD algorithm converges linearly to a point achieving the optimal statistical rate using arbitrary initialization. For known nonlinearity, we assume monotonicity as in~\cite{yang2016sparse}, and make much weaker assumptions on the sensing vectors and allow for representation error. We propose a nonlinear least-squares estimator that is guaranteed to enjoy an optimal statistical rate. A corresponding PGD algorithm is provided and is shown to also converge linearly to the estimator using arbitrary initialization. In addition, we present experimental results on image datasets to demonstrate the performance of our PGD algorithms.
\end{abstract}

\section{Introduction}\label{sec:intro}

Over the past two decades, the theoretical and algorithmic aspects of high-dimensional {\em linear} inverse problems have been studied extensively. The standard compressed sensing (CS) problem, which models low-dimensional structure via the sparsity assumption, 
is particularly well-understood~\cite{Fou13}. 

Despite the popularity of linear CS, in many real-world applications, nonlinearities may arise naturally, and it is more desirable to adopt {\em nonlinear} measurement models. For example, the semi-parametric {\em single index model (SIM)}, which is formulated below, is a popular nonlinear measurement model that has long been studied~\cite{han1987non}:
\begin{equation}\label{eq:sim_first}
 y_i = f_i(\ba_i^T\bx^*), \quad i = 1,2,\ldots,n, 
\end{equation}
where $\bx^* \in \bbR^p$ is an unknown signal that is close to some structured set $\calK$, $\ba_i \in \bbR^p$ are the sensing vectors, and $f_i \,:\, \bbR \rightarrow \bbR$ are i.i.d. realizations of an {\em unknown (possibly random)} function $f$. In general, $f$ plays the role of a nonlinearity, and it is called a {\em link} function. The goal is to estimate $\bx^*$ despite this unknown link function. Note that since the norm of $\bx^*$ can be absorbed into the unknown $f$, the signal $\bx^*$ is typically assumed to have unit $\ell_2$-norm.  

In addition, inspired by the tremendous success of deep generative models in numerous real-world applications, recently, for the CS problem, it has been of interest to replace the sparsity assumption with the generative model assumption. More specifically, instead of being assumed to be sparse, the signal is assumed to lie near the range of a generative model, typically corresponding to a deep neural network~\cite{bora2017compressed}. Along with several theoretical developments, the authors of~\cite{bora2017compressed} perform extensive numerical experiments on image datasets to demonstrate that for a given accuracy, generative priors can reduce the required number of measurements by a factor of $5$ to $10$. There are a variety of follow-up works of~\cite{bora2017compressed}, including~\cite{heckel2019deep,ongie2020deep,jalal2021instance}, among others. 

In this paper, following the developments in both sparsity-based nonlinear inverse problems and inverse problems with generative priors, we provide theoretical guarantees for projected gradient descent (PGD) algorithms devised for nonlinear inverse problems using generative models. 

\subsection{Related Work}

The most relevant existing works can roughly be divided into (i) nonlinear inverse problems without generative priors, and (ii) inverse problems with generative priors.

{\bf Nonlinear inverse problems without generative priors:} The SIM has long been studied in the low-dimensional setting where $p \ll n$, based on various assumptions on the sensing vector or link function. For example, the maximum rank correlation estimator has been proposed in~\cite{han1987non} under the assumption of a monotonic link function. In recent years, the SIM has also been studied in \cite{plan2016generalized,genzel2016high,plan2017high,oymak2017fast} in the high-dimensional setting where an accurate estimate can be obtained when $n \ll p$, with the sensing vectors being assumed to be Gaussian. In particular, the authors of~\cite{plan2016generalized} show that the generalized Lasso approach works for high-dimensional SIM under the assumption that the set of structured signal $\calK$ is convex, which is in general not satisfied for the range of a generative model with the Lipschitz continuity. 

Nonetheless, the generality of the {\em unknown} link function in SIM comes at a price. Specifically, as mentioned above, it is necessary for the works studying SIM to assume the distribution of the sensing vector to be Gaussian or symmetric elliptical, and for nonlinear signal estimation problems with general sensing vectors, in order to achieve consistent estimation, knowledge of the link function is required~\cite{zhang2018nonlinear}. In addition, when the link function is unknown, since the norm of the signal may be absorbed into this link function, there is an identifiability issue and we are only able to estimate the direction of the signal. In practice, this can be unsatisfactory and may lead to large estimation errors. Moreover, for an unknown nonlinearity, it remains an open problem to handle signals with representation error~\cite{plan2016generalized}, i.e., the signal (up to a fixed scale factor) is not exactly contained in $\calK$. 

Based on these issues of SIM and some applications in machine learning such as the activation functions of deep neural networks~\cite{yang2016sparse}, nonlinear measurement models with {\em known} and {\em monotonic} link functions\footnote{For this case, a natural idea is to apply approaches for linear measurement models to the inverted data $\{f^{-1}(y_i),\ba_i\}_{i=1}^n$. Unfortunately, such a simple idea works well only in the noiseless setting. See~\cite[Section~2]{yang2016sparse} for a discussion.} have been studied in~\cite{yang2016sparse,zhang2018nonlinear,soltani2017fast}. In~\cite{yang2016sparse}, an $\ell_1$-regularized nonlinear least-squares estimator is proposed, and an iterative soft thresholding algorithm is provided to efficiently approximate this estimator. The authors of~\cite{zhang2018nonlinear} propose an iterative hard thresholding (IHT) algorithm to minimize a nonlinear least-squares loss function subject to a combinatorial constraint. In the work~\cite{soltani2017fast}, the demixing problem is formulated as minimizing a special (not the typical least-squares) loss function under a combinatorial constraint, and a corresponding IHT algorithm is designed to approximately find a minimizer. All the algorithms proposed in~\cite{yang2016sparse,zhang2018nonlinear,soltani2017fast} can be regarded as special cases of the PGD algorithm. 

{\bf Inverse problems with generative priors:} 
Bora {\em et al.} show that when the generative model is $L$-Lipschitz continuous with bounded $k$-dimensional inputs, roughly $O(k \log L)$ random Gaussian linear measurements are sufficient to attain accurate estimates~\cite{bora2017compressed}. Their analysis is based on minimizing a linear least-squares loss function, and the objective function is minimized directly over the latent variable in $\bbR^k$ using gradient descent. A PGD algorithm in the ambient space in $\bbR^p$ has been proposed in~\cite{shah2018solving,peng2020solving} for noiseless and noisy Gaussian linear measurements respectively. It has been empirically demonstrated that this PGD algorithm leads to superior reconstruction performance over the algorithm used in~\cite{bora2017compressed}. Various nonlinear measurement models with known nonlinearity have also been studied for generative priors. Specifically, near-optimal sample complexity bounds for $1$-bit measurement models have been presented in~\cite{qiu2020robust,liu2020sample}. Furthermore, the works~\cite{wei2019statistical,liu2020generalized} have provided near-optimal non-uniform recovery guarantees for nonlinear compressed sensing with an unknown nonlinearity. More specifically, the authors of~\cite{wei2019statistical} assume that the link function is differentiable and propose estimators via score functions based on the first and second order Steins identity. The differentiability assumption fails to hold for $1$-bit and other quantized measurement models. To take such measurement models into consideration, the work~\cite{liu2020generalized} instead makes the assumption that the (uncorrupted) observations are sub-Gaussian, and proposes to use a simple linear least-squares estimator despite the unknown nonlinearity. While obtaining these estimators is practically hard due to the typical non-convexity of the range of a generative model, both works are primarily theoretical, and no practical algorithm is provided to approximately find the estimators.

\subsection{Contributions}

Throughout this paper, we make the assumption that the generative model is $L$-Lipschitz continuous with bounded $k$-dimensional inputs (see, e.g.,~\cite{bora2017compressed}). The main contributions of this paper are as follows:
\begin{itemize}
 \item For the scenario of SIM with unknown nonlinearity, we assume that the sensing vector is Gaussian and the signal is exactly contained in the range of the generative model, and propose a PGD algorithm for a linear least-squares estimator. We show that roughly $ O(k\log L)$ samples suffice to ensure that this PGD algorithm converges linearly and yields an estimator with optimal statistical rate, which is roughly of order $\sqrt{(k \log L) / n}$. While this PGD algorithm is identical to the PGD algorithm for solving {\em linear} inverse problems using generative models as proposed in~\cite{shah2018solving,peng2020solving}, the corresponding analysis is significantly different since we consider the SIM with an unknown nonlinear function $f$, instead of the simple linear measurement model. Moreover, unlike~\cite{shah2018solving,peng2020solving}, we have provided a neat theoretical guarantee for choosing the step size. 
 
 \item For the scenario where the link function is known, we make much weaker assumptions for sensing vectors and allow for representation error, i.e., the signal do not quite reside in the range of the generative model, and propose a nonlinear least-squares estimator. We prove that the estimator enjoys optimal statistical rate, and show that a corresponding PGD algorithm converges linearly to this estimator. To the best of our knowledge, the corresponding PGD algorithm ({\em cf.}~\eqref{eq:pgd_known}) is novel. 
 
 \item We perform various numerical experiments on image datasets to back up our theoretical results.
\end{itemize}

\begin{remark}
 Generative model based phase retrieval has been studied in~\cite{hand2018phase,jagatap2019algorithmic,hyder2019alternating,liu2021towards}. However, for the scenario of SIM with unknown nonlinearity, we follow the settings in~\cite{liu2020generalized}, and as mentioned therein, phase retrieval is beyond the scope of this setup. For the case of a known link function, phase retrievel is also not applicable since its corresponding nonlinear functions are not monotonic. Moreover, it is typically unavoidable for phase retrieval with generative priors to require the strong assumption about the existence of a {\em good initial vector}, whereas for the nonlinear function (whether it is unknown or known) and the corresponding PGD algorithm considered in our work, the initial vector can be {\em arbitrary}.
\end{remark}

\subsection{Notation}

We use upper and lower case boldface letters to denote matrices and vectors respectively. For any positive integer $N$, we write $[N]=\{1,2,\ldots,N\}$ and we use $\bI_N$ to represent an identity matrix in $\bbR^{N\times N}$. A {\em generative model} is a function $G \,:\, \calD\to \bbR^p$, with latent dimension $k$, ambient dimension $p$, and input domain $\calD \subseteq \bbR^k$. We focus on the setting where $k \ll p$. For a set $S \subseteq \bbR^k$ and a generative model $G \,:\,\bbR^k \to \bbR^p$, we write $G(S) = \{ G(\bz) \,:\, \bz \in S  \}$. We use $\|\bX\|_{2 \to 2}$ to denote the spectral norm of a matrix $\bX$. We define the $\ell_q$-ball $B_q^k(r):=\{\bz \in \bbR^k: \|\bz\|_q \le r\}$ for $q \in [0,+\infty]$. The symbols $c, C$ are absolute constants whose values may be different per appearance.

\section{Preliminaries}

We present the definition for a sub-Gaussian random variable.

\begin{definition} \label{def:subg}
 A random variable $X$ is said to be sub-Gaussian if there exists a positive constant $C$ such that $\left(\mathbb{E}\left[|X|^{q}\right]\right)^{1/q} \leq C  \sqrt{q}$ for all $q\geq 1$.  The sub-Gaussian norm of a sub-Gaussian random variable $X$ is defined as $\|X\|_{\psi_2}:=\sup_{q\ge 1} q^{-1/2}\left(\mathbb{E}\left[|X|^{q}\right]\right)^{1/q}$. 
\end{definition}

Throughout this paper, we make the assumption that the generative model $G\,:\, B_2^k(r) \to \bbR^p$ is $L$-Lipschitz continuous, and we fix the structured set $\calK$ to be the range of $G$, i.e., $\calK:= G(B_2^k(r))$. 

In the following, we state the definition of the Two-sided Set-Restricted Eigenvalue Condition (TS-REC), which is adapted from the S-REC proposed in~\cite{bora2017compressed}. 

\begin{definition}
 Let $S \subseteq \bbR^p$. For parameters $\epsilon \in (0,1)$, $\delta \ge 0$, a matrix $\tilde{\bA} \in \bbR^{n \times p}$ is said to satisfy the TS-REC($S,\epsilon,\delta$) if, for every $\bx_1,\bx_2 \in S$, it holds that
 \begin{align}
 (1-\epsilon)\|\bx_1 -\bx_2\|_2 -\delta &\le \left\|\tilde{\bA} (\bx_1-\bx_2)\right\|_2 \le (1+\epsilon)\|\bx_1 -\bx_2\|_2 +\delta,
\end{align}
\end{definition}

Suppose that $\bB\in \bbR^{n\times p}$ has i.i.d.~$\calN(0,1)$ entries. We have the following lemma, which says that $\frac{1}{\sqrt{n}}\bB$ satisfies TS-REC for the set $\calK = G(B_2^k(r))$ with high probability. 
    \begin{lemma}{\em (\hspace{1sp}Adapted from~\cite[Lemma~4.1]{bora2017compressed})}\label{lem:boraTSREC}
     For $\epsilon \in (0,1)$ and $\delta>0$, if $n = \Omega\left(\frac{k}{\epsilon^2} \log \frac{Lr}{\delta}\right)$,\footnote{Here and in subsequent statements of lemmas and theorems, the implied constant is assumed to be sufficiently large.} then a random matrix $\frac{1}{\sqrt{n}}\bB \in \bbR^{n \times p}$ with $b_{ij} \overset{i.i.d.}{\sim} \calN\left(0,1\right)$ satisfies the TS-REC$(\calK,\epsilon,\delta)$ with probability $1-e^{-\Omega(\epsilon^2 n)}$.
\end{lemma}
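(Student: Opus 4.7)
The plan is to follow the classical $\epsilon$-net strategy used in the proof of \cite[Lemma~4.1]{bora2017compressed}, observing that upgrading the one-sided S-REC to the two-sided TS-REC is essentially free because the Gaussian concentration invoked at every step of the argument is itself two-sided. First I would construct a $(\delta/L)$-net $N \subset B_2^k(r)$ in the Euclidean norm; a standard volume comparison gives $|N| \le (4Lr/\delta)^k$. Since $G$ is $L$-Lipschitz, its image $M := G(N) \subset \calK$ is then a $\delta$-net of $\calK$ of the same cardinality.

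Next I would apply pointwise Gaussian concentration on the net. For any fixed nonzero $\bv = \bm - \bm'$ with $\bm, \bm' \in M$, the random variable $\|\bB\bv\|_2^2/\|\bv\|_2^2$ has a $\chi_n^2$ distribution, so a Laurent--Massart-type tail bound yields
\[ \Big|\tfrac{1}{\sqrt n}\|\bB\bv\|_2 - \|\bv\|_2\Big| \le \tfrac{\epsilon}{2}\,\|\bv\|_2 \]
with probability at least $1 - 2e^{-c\epsilon^2 n}$. A union bound over the at most $|M|^2 \le (4Lr/\delta)^{2k}$ ordered pairs then produces a two-sided event $\calE$ on the net whose failure probability is at most $2(4Lr/\delta)^{2k}e^{-c\epsilon^2 n}$; under the hypothesis $n = \Omega(\epsilon^{-2} k \log(Lr/\delta))$ with a large enough implied constant, this is at most $e^{-\Omega(\epsilon^2 n)}$, as required.

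The main obstacle is lifting from $M$ to $\calK$. For arbitrary $\bx_1, \bx_2 \in \calK$ I would pick $\bm_i \in M$ with $\|\bx_i - \bm_i\|_2 \le \delta$ and decompose $\bx_1 - \bx_2 = (\bm_1 - \bm_2) + (\bx_1 - \bm_1) - (\bx_2 - \bm_2)$. On $\calE$, the first summand is controlled by the second step, and combining with $\big|\|\bm_1-\bm_2\|_2 - \|\bx_1-\bx_2\|_2\big| \le 2\delta$ delivers the desired factor $(1\pm\epsilon/2)$ together with an $O(\delta)$ additive slack. The remainders $\tfrac{1}{\sqrt n}\bB(\bx_i - \bm_i)$ are the tricky part, since $\bx_i - \bm_i$ lies off the net. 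I would dominate them either by the standard Gaussian operator-norm bound $\|\tfrac{1}{\sqrt n}\bB\|_{2\to 2} \lesssim 1 + \sqrt{p/n}$ after shrinking the net scale by this factor (the extra $\log(1+\sqrt{p/n})$ contribution to the entropy being dominated by the $n$ on the right), or by a dyadic chaining argument across nets at scales $\delta\cdot 2^{-j}$ with the errors summed in a geometric series. After absorbing constants into $\delta$, either route yields the upper bound in the two-sided conclusion; the matching lower bound follows from the reverse triangle inequality applied to the same decomposition, completing the proof.
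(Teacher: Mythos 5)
Your proposal is correct and follows essentially the same route as the paper, which states this lemma without a self-contained proof by citing the net-plus-chaining argument of Bora \emph{et al.}'s Lemma~4.1; the two-sided upgrade is indeed free because the underlying Gaussian (Johnson--Lindenstrauss) concentration is two-sided. Your handling of the off-net residuals via the operator-norm bound $\|\tfrac{1}{\sqrt{n}}\bB\|_{2\to 2} \lesssim 1+\sqrt{p/n}$ combined with dyadic nets at scales $\delta\cdot 2^{-j}$ is exactly the mechanism the paper itself redeploys in its proof of Theorem~\ref{thm:pgd_unknown} (where the choice $q=\lceil \log_2 p\rceil$ and the condition $Lr=\Omega(\delta p)$ absorb the resulting $\log p$ entropy overhead).
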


\section{PGD for Unknown Nonlinearity}\label{sec:unknown}

In this section, we provide theoretical guarantees for a PGD algorithm in the case that the nonlinear link function $f$ is unknown. For this case, we follow~\cite{liu2020generalized} and make the assumptions:   

\begin{itemize}
  \item Let $\bx^* \in \bbR^p$ be the signal to estimate. We assume that $\mu \bx^*$ is contained in the set $\calK = G(B_2^k(r))$, where\footnote{$\mu$ is important for analyzing the recovery performance, but note that the knowledge of $\mu$ cannot be assumed since $f$ is unknown.} 
  \begin{equation}\label{eq:mu_def}
   \mu := \bbE_{g \sim \calN(0,1)}[f(g)g]
  \end{equation}
  is a fixed parameter depending solely on $f$. Since the norm of $\bx^*$ may be absorbed into the unknown $f$, for simplicity of presentation, we assume that $\big\|\bx^*\big\|_2 =1$.

 \item $\ba_i$ are i.i.d.~realizations of a random vector $\ba \sim \calN(\mathbf{0},\bI_p)$, with $\ba$ being independent of $f$. We write the sensing matrix as $\bA = [\ba_1,\ldots,\ba_n]^T \in \bbR^{n \times p}$.
 
 \item We assume the SIM for the (unknown) uncorrupted measurements $y_1,y_2,\ldots,y_n$ as in~\eqref{eq:sim_first}. 

\item Similarly to \cite{plan2017high,liu2020generalized}, the random variable $y := f(\ba^T\bx^*)$ is assumed to be sub-Gaussian with sub-Gaussian norm $\psi$, i.e., 
\begin{equation}\label{eq:psi_def}
 \psi :=\|f(\ba^T\bx^*)\|_{\psi_2} = \|f(g)\|_{\psi_2},
\end{equation}
where $g \sim\calN(0,1)$. Such an assumption will be satisfied, e.g., when $f$ does not grow faster than linearly, i.e., for any $x \in \bbR$, $|f(x)| \le a + b|x|$ for some $a$ and $b$. Hence, various noisy $1$-bit measurement models and non-binary quantization schemes satisfy this assumption~\cite{liu2020generalized}. 

\item In addition to possible random noise in $f$, we allow for adversarial noise that may depend on $\ba$. In particular, instead of observing $\by$ directly, we only assume access to (corrupted) measurements $\tilde{\by} = [\tilde{y}_1,\ldots,\tilde{y}_n]^T \in \bbR^n$ satisfying
 \begin{equation}
    \frac{1}{\sqrt{n}} \|\tilde{\by}-\by\|_2 \le \tau
 \end{equation}
 for some $\tau \ge 0$, where $\by =[y_1,y_2,\ldots,y_n]^T \in \bbR^n$. 

\item To derive an estimate of the signal $\bx^*$ (up to constant scaling), we minimize the linear $\ell_2$ loss over $\calK$:
 \begin{equation}\label{eq:gen_lasso}
  \mathrm{minimize} \quad \calL_1(\bx) := \frac{1}{2n}\|\tilde{\by} -\bA\bx\|_2^2 \quad \text{s.t.} \quad \bx \in \calK.
 \end{equation}
  The above optimization problem is referred to as the generalized Lasso or $\calK$-Lasso. The idea behind using the $\calK$-Lasso to derive an accurate estimate even for nonlinear observations is that the nonlinearity is regarded as noise and the nonlinear observation model can be converted  into a scaled linear model with unconventional noise~\cite{plan2016generalized}.
\end{itemize}
The authors of~\cite{liu2020generalized} provide recovery guarantees with respect to globally optimal solutions of~\eqref{eq:gen_lasso}, but they have not designed practical algorithms to find an optimal solution. Solving~\eqref{eq:gen_lasso} may be practically difficult since in general, $\calK = G(B_2^k(r))$ is not a convex set. In this section, we consider using the following iterative procedure to approximately solve~\eqref{eq:gen_lasso}:
\begin{align}
 \bx^{(t+1)} &= \calP_{\calK}\left(\bx^{(t)} - \nu \cdot \nabla \calL_1\left(\bx^{(t)}\right)\right) \\
 &= \calP_{\calK}\left(\bx^{(t)} - \frac{\nu}{n}\cdot \bA^T \left(\bA \bx^{(t)} - \tilde{\by}\right)\right),\label{eq:pgd_unknown}
\end{align}
where $\calP_\calK(\cdot)$ is the projection function onto $\calK$ and $\nu>0$ is a tuning parameter. For convenience, the corresponding algorithm is described in Algorithm~\ref{algo:pgd_gLasso}. 
\begin{remark}
 We will implicitly assume the exact projection in analysis. Our proof technique does not require $\calP_{\calK}$ to be unique, but only requires it to be a retraction onto the manifold of the generative prior. The exact projection assumption is also made in relevant works including~\cite{hyder2019alternating,liu2022generative,peng2020solving,shah2018solving}. In practice approximate methods might be needed, and both gradient-based projection~\cite{shah2018solving} and GAN-based projection~\cite{raj2019gan} have been shown to be highly effective. Compared to exact projection, in practice global optima of optimization problems like~\eqref{eq:gen_lasso} are typically much more difficult to approximate, and projection-based methods may serve as powerful tools for approximating the global optima. For example, for the simple linear measurement model, it has been numerically verified that performing gradient descent over the latent variable (without using projection) leads to inferior performance and cannot approximate the global optima of~\eqref{eq:gen_lasso} well, whereas a projection-based gradient descent method gives better reconstruction~\cite{shah2018solving}.
\end{remark}

\begin{algorithm}[t]
\caption{A PGD algorithm for approximately solving~\eqref{eq:gen_lasso} (\texttt{PGD-GLasso})}
\label{algo:pgd_gLasso}
{\bf Input}: $\bA$, $\tilde{\by}$, $\nu >0$, number of iterations $T$, generative model $G$, arbitrary initial vector $\bx^{(0)}$ \\
{\bf Procedure}: Iterate as in~\eqref{eq:pgd_unknown} for $t = 0,\ldots,T-1$; return $\bx^{(T)}$
\end{algorithm}

Algorithm~\ref{algo:pgd_gLasso} is identical to the PGD algorithm for solving linear inverse problems using generative models as proposed in~\cite{shah2018solving,peng2020solving}. However, the corresponding analysis is significantly different, since we consider the SIM with an unknown nonlinear function $f$, instead of the simple linear measurement model. In particular, we have the following theorem showing that if $2\mu_1 <1$, Algorithm~\ref{algo:pgd_gLasso} converges linearly and achieves optimal statistical rate, which is roughly of order $\sqrt{(k\log L)/n}$.
The proof of Theorem~\ref{thm:pgd_unknown} is provided in the supplementary material. 
\begin{theorem}\label{thm:pgd_unknown}
 Recall that $\mu$ and $\psi$ are defined in~\eqref{eq:mu_def} and~\eqref{eq:psi_def} respectively. For any $\epsilon \in (0,1)$, letting 
 \begin{equation}
  \mu_1 := \max \{1-\nu(1-\epsilon),\nu (1+\epsilon) -1\}.
 \end{equation}
 For any $\delta > 0$ satisfying $Lr = \Omega(\delta p)$, if $n = \Omega\big(\frac{k}{\epsilon^2}\log \frac{Lr}{\delta}\big)$ and $2 \mu_1 <1$ with $1-2\mu_1 = \Theta(1)$, then for any $t \in \bbN$, with probability $1-e^{-\Omega(\epsilon^2 n)}$, we have 
 \begin{align}\label{eq:pgd_unknown_main}
  \| \bx^{(t)}-\mu \bx^* \|_2 &\le  (2\mu_1)^t \cdot \|\bx^{(0)}-\mu \bx^* \|_2 + C\left(\psi\sqrt{\frac{k\log\frac{Lr}{\delta}}{n}} +\delta + \tau\right).
 \end{align}
\end{theorem}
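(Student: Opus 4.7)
\textbf{Proof Plan for Theorem~\ref{thm:pgd_unknown}.}

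The plan is to combine the optimality of the projection onto $\calK$ with a restricted-isometry style analysis of $\bI-\tfrac{\nu}{n}\bA^T\bA$ on $\calK-\calK$ and a concentration bound for the ``effective noise'' of the single-index model. Write $\bu^{(t)} := \bx^{(t)} - \tfrac{\nu}{n}\bA^T(\bA\bx^{(t)}-\tilde{\by})$, so that $\bx^{(t+1)}=\calP_{\calK}(\bu^{(t)})$. Since $\mu\bx^*\in\calK$, the defining optimality of the projection yields $\|\bx^{(t+1)}-\bu^{(t)}\|_2^2\le\|\mu\bx^*-\bu^{(t)}\|_2^2$; expanding squares gives the basic inequality
\begin{equation*}
\|\bx^{(t+1)}-\mu\bx^*\|_2^{\,2}\;\le\;2\bigl\langle\bx^{(t+1)}-\mu\bx^*,\;\bu^{(t)}-\mu\bx^*\bigr\rangle.
\end{equation*}
I would then split $\bu^{(t)}-\mu\bx^*=\bigl(\bI-\tfrac{\nu}{n}\bA^T\bA\bigr)(\bx^{(t)}-\mu\bx^*)+\tfrac{\nu}{n}\bA^T(\tilde{\by}-\bA\mu\bx^*)$ and further decompose the noise as $\tilde{\by}-\bA\mu\bx^*=(\by-\bA\mu\bx^*)+(\tilde{\by}-\by)$. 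This produces three terms $T_1,T_2,T_3$ on the right-hand side, corresponding to the ``contraction,'' ``SIM noise,'' and ``adversarial noise'' contributions, each paired against the test vector $\bw:=\bx^{(t+1)}-\mu\bx^*\in\calK-\calK$.

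The key technical step is bounding $T_1=\langle\bw,(\bI-\tfrac{\nu}{n}\bA^T\bA)\bv\rangle$ with $\bv:=\bx^{(t)}-\mu\bx^*$. The trick is the polarization identity
\begin{equation*}
\langle\bw,\bv\rangle-\tfrac{\nu}{n}\langle\bA\bw,\bA\bv\rangle=\tfrac{1}{2}\bigl[q(\bw)+q(\bv)-q(\bw-\bv)\bigr],\qquad q(\bu):=\|\bu\|_2^{2}-\tfrac{\nu}{n}\|\bA\bu\|_2^{2},
\end{equation*}
because $\bw$, $\bv$, and $\bw-\bv=\bx^{(t+1)}-\bx^{(t)}$ all lie in $\calK-\calK$, so Lemma~\ref{lem:boraTSREC} applies to each summand. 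Invoking TS-REC$(\calK,\epsilon,\delta)$ on each of $q(\bw),q(\bv),q(\bw-\bv)$ and collecting terms yields
\begin{equation*}
|T_1|\;\le\;\mu_1\,\|\bw\|_2\|\bv\|_2+C\delta\bigl(\|\bw\|_2+\|\bv\|_2\bigr)+C\delta^{2},
\end{equation*}
with $\mu_1=\max\{1-\nu(1-\epsilon),\,\nu(1+\epsilon)-1\}$ as in the statement.

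For $T_3$, Cauchy--Schwarz plus TS-REC gives $|T_3|\le\nu\bigl((1+\epsilon)\|\bw\|_2+\delta\bigr)\tau$. For $T_2$ I would exploit that $\bbE[y_i\ba_i]=\mu\bx^*$ (a direct consequence of $\ba_i\sim\calN(\bzero,\bI_p)$, the definition \eqref{eq:mu_def}, and orthogonal decomposition along $\bx^*$), so the summands in $\tfrac{1}{n}\bA^T(\by-\bA\mu\bx^*)$ are mean-zero sub-exponentials with norm $O(\psi\|\bw\|_2)$; a Bernstein inequality combined with an $\delta$-net over $\calK$ of log-cardinality $O(k\log(Lr/\delta))$ (using the Lipschitz property of $G$ to transfer covering numbers from $B_2^k(r)$ to $\calK$) yields, with probability $1-e^{-\Omega(\epsilon^2 n)}$,
\begin{equation*}
|T_2|\;\le\;C\nu\,\|\bw\|_2\Bigl(\psi\sqrt{\tfrac{k\log(Lr/\delta)}{n}}+\delta\Bigr).
\end{equation*}

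Finally, substituting the three bounds into the basic inequality, dividing by $\|\bw\|_2=\|\bx^{(t+1)}-\mu\bx^*\|_2$ (absorbing the cross terms in $\delta\|\bw\|_2$), and simplifying produces the one-step recursion
\begin{equation*}
\|\bx^{(t+1)}-\mu\bx^*\|_2\;\le\;2\mu_1\,\|\bx^{(t)}-\mu\bx^*\|_2+C\Bigl(\psi\sqrt{\tfrac{k\log(Lr/\delta)}{n}}+\delta+\tau\Bigr).
\end{equation*}
Iterating this $t$ times and summing the geometric series $\sum_{s\ge 0}(2\mu_1)^s=O(1)$---valid because $1-2\mu_1=\Theta(1)$---delivers \eqref{eq:pgd_unknown_main}. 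The main obstacle I anticipate is the argument for $T_1$: the TS-REC as stated only controls norms of single vectors in $\calK-\calK$, and it is precisely the (fortunate) fact that $\bx^{(t+1)}-\bx^{(t)}\in\calK-\calK$ that makes the polarization decomposition reduce every piece to one TS-REC application, rather than forcing us to enlarge the set on which TS-REC holds or to prove a fresh concentration inequality for the bilinear form.
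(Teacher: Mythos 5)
Your overall architecture matches the paper's proof: the projection-optimality inequality, the three-term decomposition into contraction, SIM-noise, and adversarial-noise contributions, the net-plus-Bernstein bound for $T_2$ (the paper invokes \cite[Lemma~3]{liu2020generalized} for exactly this), Cauchy--Schwarz plus TS-REC for $T_3$, and the geometric-series iteration at the end. The gap is in your key step for $T_1$. The polarization identity is correct, and you are right that $\bw$, $\bv$, and $\bw-\bv=\bx^{(t+1)}-\bx^{(t)}$ all lie in $\calK-\calK$, so TS-REC controls each of $q(\bw),q(\bv),q(\bw-\bv)$. But TS-REC only gives $|q(\bu)|\le\tilde{\mu}_1\|\bu\|_2^2+O(\delta\|\bu\|_2+\delta^2)$ for each vector \emph{separately}, and the three relative deviations need not cancel; the triangle inequality therefore yields
\begin{equation*}
|T_1|\;\le\;\tfrac{1}{2}\tilde{\mu}_1\bigl(\|\bw\|_2^2+\|\bv\|_2^2+\|\bw-\bv\|_2^2\bigr)+\cdots\;\le\;\tilde{\mu}_1\bigl(\|\bw\|_2^2+\|\bv\|_2^2+\|\bw\|_2\|\bv\|_2\bigr)+\cdots,
\end{equation*}
not the claimed $\mu_1\|\bw\|_2\|\bv\|_2+\cdots$. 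The standard RIP-style fix---normalize $\bw,\bv$ to unit vectors before polarizing so the quadratic terms rescale to $\epsilon\|\bw\|_2\|\bv\|_2$---is unavailable here because $\calK-\calK$ is not a cone and the additive $\delta$ in TS-REC does not scale, so the normalized vectors (and their sums) leave the set on which TS-REC holds.

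The loss is quantitative but fatal to the statement as written. Substituting your bound into $\|\bw\|_2^2\le 2(T_1+T_2+T_3)$ gives $(1-2\tilde{\mu}_1)\|\bw\|_2^2\le 2\tilde{\mu}_1\|\bv\|_2^2+2\tilde{\mu}_1\|\bw\|_2\|\bv\|_2+(\text{noise})\cdot\|\bw\|_2$, whose solution contracts at rate roughly $\sqrt{2\mu_1}$ per iteration rather than $2\mu_1$, and contracts at all only when $\mu_1<1/6$ rather than under the stated hypothesis $2\mu_1<1$; for instance, at $\mu_1=0.2$ the theorem promises rate $0.4$ while your recursion does not contract. The paper does precisely what you hoped to avoid: it proves a fresh concentration inequality for the bilinear form (its Lemma~\ref{lem:aux_main_unknown}), showing that for \emph{fixed} $\bx_1,\bx_2$ one has $\bigl|\bigl\langle(\bI_p-\tfrac{\nu}{n}\bA^T\bA)\bx_1,\bx_2\bigr\rangle\bigr|\le\mu_1\|\bx_1\|_2\|\bx_2\|_2$ with probability $1-e^{-\Omega(n\epsilon^2)}$, via sub-exponential concentration of $(\ba_i^T\bx_1)(\ba_i^T\bx_2)$ around $\bx_1^T\bx_2$, and then union-bounds over pairs of net points in $G(M)\times G(M)$ and chains the residuals. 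That two-vector concentration step is where the Gaussian assumption on $\ba_i$ is used beyond TS-REC, and it cannot be recovered from the one-vector TS-REC by polarization without destroying the cross-term structure that the contraction rate $2\mu_1$ depends on.
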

To ensure that $2\mu_1 < 1$, if $\epsilon$ is chosen to be a sufficiently small positive constant, we should select the parameter $\nu$ from the interval $(0.5,1.5)$, and a good choice of $\nu$ is $\nu =1$. In addition, a $d$-layer neural network generative model typically has Lipschitz constant $L = p^{\Theta(d)}$~\cite{bora2017compressed}, and thus we may set $r = p^{\Theta(d)}$ and $\delta = \frac{1}{p^{\Theta(d)}}$ without affecting the scaling of the term $\log \frac{Lr}{\delta}$ (and the assumption $Lr = \Omega(\delta p)$ is certainly satisfied for fixed $(r,\delta)$).  Hence, if there is no adversarial noise, i.e., $\tau = 0$, and $\psi$ is a fixed constant, we see that after a sufficient number of iterations, Algorithm~\ref{algo:pgd_gLasso} will return a point $\bx^{(T)}$ satisfying $\big\|\bx^{(T)}-\mu\bx^*\big\|_2 = O\big(\sqrt{\frac{k\log\frac{Lr}{\delta}}{n}}\big)$. By the analysis of sample complexity lower bounds for noisy linear CS using generative models~\cite{liu2020information,kamath2020power}, this statistical rate is optimal and cannot be improved without extra assumptions. 

\section{PGD for Known Nonlinearity}\label{sec:known}

In this section, we provide theoretical guarantees for the case when the nonlinear link function $f$ is known. Throughout this section, we make the following assumptions: 

\begin{itemize}
\item Unlike in the case of unknown nonlinearity, we now allow for representation error and assume that the signal $\bx^*$ lies near (but does not need to be exactly contained in) $\calK = G(B_2^k(r))$. Note that since we have precise knowledge of $f$, we do not need to make any assumption on the norm of $\bx^*$.

 \item The sensing matrix $\bA \in \bbR^{n\times p}$ satisfies the following two assumptions with high probability:
 \begin{enumerate}
  \item Johnson-Lindenstrauss embeddings (JLE): For any $\epsilon\in (0,1)$ and any finite set $E \subseteq \bbR^p$ satisfying $n = \Omega\big(\frac{1}{\epsilon^{c_1}}\cdot \log^{c_2}|E|\big)$ for some absolute constants $c_1,c_2$, we have for {\em all} $\bx \in E$  that
  \begin{equation}\label{eq:JLE_assump}
   (1-\epsilon) \|\bx\|_2^2 \le \left\|\frac{1}{\sqrt{n}}\bA\bx\right\|_2^2 \le (1+\epsilon) \|\bx\|_2^2.
  \end{equation}

  \item Bounded spectral norm:  
  For some absolute constant $a$, it holds that
  \begin{equation}\label{eq:BSN_assump}
   \|\bA\|_{2 \to 2} = O(p^a).
  \end{equation} 
 \end{enumerate}
When $\bA$ has independent isotropic sub-Gaussian rows,\footnote{A random vector $\bv $ is said to be isotropic if $\bbE[\bv\bv^T] = \bI_p$.} from Lemma~\cite[Proposition~5.16]{vershynin2010introduction}, we have that when $n = \Omega\big(\frac{1}{\epsilon^2}\log|E|\big)$ (thus $c_1 = 2$ and $c_2 =1$), the event corresponding to~\eqref{eq:JLE_assump} occurs with probability $1-e^{-\Omega(n\epsilon^2)}$. In addition, similarly to~\cite[Corollary~5.35]{vershynin2010introduction}, we have that with probability $1-e^{-\Omega(n)}$, the assumption about bounded spectral norm is satisfied with $a = 0.5$. Moreover, from the theoretical results concerning JLE in~\cite{krahmer2011new} and the standard inequality $\|\bA\|_{2\to 2} \le \max \{\|\bA\|_{1 \to 1}, \|\bA\|_{\infty \to \infty}\}$, we know that when $\bA$ is a subsampled Fourier matrix or a partial Gaussian circulant matrix with random column sign flips and isotropic rows, these two assumptions are also satisfied with high probability for appropriate absolute constants $c_1,c_2$ and $a$. Hence, these assumptions on $\bA$ are significantly more generalized than the i.i.d.~Gaussian assumption made for the case of unknown nonlinearity. Notably, when the two assumptions are satisfied, by a chaining argument~\cite{bora2017compressed,liu2021robust}, the random matrix $\frac{1}{\sqrt{n}}\bA$ satisfies the TS-REC for $\calK$.
 
 \item The (unknown) uncorrupted measurements are generated from the following measurement model:
 \begin{equation}
  y_i = f(\ba_i^T\bx^*) + \eta_i, \quad i = 1,2,\ldots,n, \label{eq:yi_known}
 \end{equation}
where $f \,:\, \bbR \rightarrow \bbR$ is a {\em known (deterministic)} nonlinear function, and $\eta_i$ are additive noise terms. Similarly to~\cite{yang2016sparse,soltani2017fast,zhang2018nonlinear}, we assume that $f$ is monotonic, differentiable, and for all $x \in \bbR$, $f'(x) \in [l,u]$ with $u \ge l >0$ being fixed constants.\footnote{The case that $l\le u <0$ can be similarly handled.}   In addition, we assume that $\eta_i$ are independent realizations of zero-mean sub-Gaussian random variables with maximum sub-Gaussian norm $\sigma$. 

\item We also allow for adversarial noise and assume that for some $\tau \ge 0$, the observed (corrupted) vector $\tilde{\by}$ satisfies
 \begin{equation}
    \frac{1}{\sqrt{n}} \|\tilde{\by}-\by\|_2 \le \tau.
 \end{equation}

\item To estimate the signal $\bx^*$, we utilize the knowledge of $f$, and consider minimizing the nonlinear $\ell_2$ loss over $\calK$:
 \begin{equation}\label{eq:gen_lasso_known}
  \mathrm{minimize} \quad \calL_2(\bx) := \frac{1}{2n}\|\tilde{\by} -f(\bA\bx)\|_2^2 \quad \text{s.t.} \quad \bx \in \calK.
 \end{equation}
\end{itemize}
Under the preceding assumptions, we have the following theorem that gives a recovery guarantee for optimal solutions to~\eqref{eq:gen_lasso_known}. The proof is placed in the supplementary material.
\begin{theorem}\label{thm:opt_known}
 Let $\bar{\bx} = \arg\min_{\bx \in \calK}\|\bx-\bx^*\|_2$. For any $\delta >0$, we have that any solution $\hat{\bx}$ to~\eqref{eq:gen_lasso_known} satisfies
 \begin{equation}\label{eq:ub_opt_known}
  \|\hat{\bx}-\bx^*\|_2 \le O\left(\|\bar{\bx}-\bx^*\|_2 + \sigma\sqrt{\frac{k\log\frac{Lr}{\delta}}{n}} +\tau +\delta\right).
 \end{equation}
\end{theorem}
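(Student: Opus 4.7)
The plan is to extract a basic inequality from the optimality of $\hat{\bx}$, bound each term in a three-piece decomposition of the residual $\tilde{\by}-f(\bA\bar{\bx})$, and finally invert the resulting control on $\|f(\bA\hat{\bx})-f(\bA\bar{\bx})\|_2$ through the monotonicity of $f$ and the TS-REC. Under the JLE and bounded spectral norm assumptions, a standard chaining argument (as cited in the paper) ensures that $\tfrac{1}{\sqrt{n}}\bA$ satisfies TS-REC$(\calK,\epsilon,\delta)$ with high probability; I also apply the JLE to the single fixed vector $\bx^*-\bar{\bx}$ to convert $\|\bA(\bx^*-\bar{\bx})\|_2$ to $O(\sqrt{n}\|\bx^*-\bar{\bx}\|_2)$. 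Expanding $\|\tilde{\by}-f(\bA\hat{\bx})\|_2^2\le\|\tilde{\by}-f(\bA\bar{\bx})\|_2^2$ yields the basic inequality
\begin{equation*}
\|f(\bA\hat{\bx})-f(\bA\bar{\bx})\|_2^2 \le 2\langle \tilde{\by}-f(\bA\bar{\bx}),\, f(\bA\hat{\bx})-f(\bA\bar{\bx})\rangle,
\end{equation*}
and I would decompose the residual as $\tilde{\by}-f(\bA\bar{\bx})=(\tilde{\by}-\by)+\bm{\eta}+(f(\bA\bx^*)-f(\bA\bar{\bx}))$.

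The adversarial-noise and representation-error pieces are immediate: by Cauchy--Schwarz together with the Lipschitz upper bound $\|f(\bA\bx^*)-f(\bA\bar{\bx})\|_2\le u\|\bA(\bx^*-\bar{\bx})\|_2$, they contribute at most $2\tau\sqrt{n}\|f(\bA\hat{\bx})-f(\bA\bar{\bx})\|_2$ and $2u\sqrt{n(1+\epsilon)}\|\bx^*-\bar{\bx}\|_2\cdot\|f(\bA\hat{\bx})-f(\bA\bar{\bx})\|_2$ respectively. The random-noise piece $\langle \bm{\eta},\,f(\bA\hat{\bx})-f(\bA\bar{\bx})\rangle$ is the delicate one: the crude bound $\|\bm{\eta}\|_2\cdot\|f(\bA\hat{\bx})-f(\bA\bar{\bx})\|_2$ loses a factor of $\sqrt{n}$ and yields an $O(\sigma)$ rate rather than the target $O(\sigma\sqrt{k\log(Lr/\delta)/n})$. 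I would therefore use a covering argument: lift a $(\delta_0/L)$-net of $B_2^k(r)$ through $G$ to obtain a $\delta_0$-net $\calN$ of $\calK$ of cardinality at most $(3Lr/\delta_0)^k$. Since $\bm{\eta}$ is independent of $\bA$, conditional on $\bA$ each inner product $\langle\bm{\eta},f(\bA\bx_j)-f(\bA\bar{\bx})\rangle$ is a mean-zero sub-Gaussian random variable with proxy $\sigma\|f(\bA\bx_j)-f(\bA\bar{\bx})\|_2$; a sub-Gaussian tail bound and a union bound over $\calN$ give
\begin{equation*}
|\langle \bm{\eta},f(\bA\bx_j)-f(\bA\bar{\bx})\rangle|\le C\sigma\sqrt{k\log(Lr/\delta_0)}\,\|f(\bA\bx_j)-f(\bA\bar{\bx})\|_2
\end{equation*}
simultaneously for all $\bx_j\in\calN$ with probability $1-e^{-\Omega(k\log(Lr/\delta_0))}$. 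For an arbitrary $\bx\in\calK$ with nearest net point $\bx_j$, the Lipschitz property of $f$, the bounded spectral norm $\|\bA\|_{2\to 2}=O(p^a)$, and the standard sub-Gaussian norm bound $\|\bm{\eta}\|_2=O(\sigma\sqrt{n})$ control the resulting discretization residual by $O(\sigma u p^a\sqrt{n}\,\delta_0)$. Choosing $\delta_0$ proportional to $\sqrt{n}\delta^2/(\sigma u p^a)$ keeps $\log(Lr/\delta_0)=O(\log(Lr/\delta))$ (using the polynomial scaling $L=p^{\Theta(d)}$) while pushing this residual into an additive $O(n\delta^2)$ contribution.

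Collecting all three bounds in the basic inequality gives $X^2\le AX+B$ with $X=\|f(\bA\hat{\bx})-f(\bA\bar{\bx})\|_2$, $A=C(\tau\sqrt{n}+u\sqrt{n}\|\bx^*-\bar{\bx}\|_2+\sigma\sqrt{k\log(Lr/\delta)})$, and $B=Cn\delta^2$; the elementary implication $X\le A+\sqrt{B}$ and division by $\sqrt{n}$ yield
\begin{equation*}
\tfrac{1}{\sqrt{n}}\|f(\bA\hat{\bx})-f(\bA\bar{\bx})\|_2 = O\!\left(\tau+\|\bx^*-\bar{\bx}\|_2+\sigma\sqrt{\tfrac{k\log(Lr/\delta)}{n}}+\delta\right).
\end{equation*}
Since both $\hat{\bx},\bar{\bx}\in\calK$, the monotonicity assumption gives the pointwise bound $\|f(\bA\hat{\bx})-f(\bA\bar{\bx})\|_2\ge l\|\bA(\hat{\bx}-\bar{\bx})\|_2$, and the TS-REC gives $\tfrac{1}{\sqrt{n}}\|\bA(\hat{\bx}-\bar{\bx})\|_2\ge(1-\epsilon)\|\hat{\bx}-\bar{\bx}\|_2-\delta$; rearranging and applying the triangle inequality $\|\hat{\bx}-\bx^*\|_2\le\|\hat{\bx}-\bar{\bx}\|_2+\|\bar{\bx}-\bx^*\|_2$ gives~\eqref{eq:ub_opt_known}. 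The main obstacle is the covering step for the random-noise inner product: the naive Cauchy--Schwarz approach destroys the crucial $\sqrt{n}$ averaging, so one must carefully balance the net radius $\delta_0$ against the spectral-norm-driven discretization residual while preserving the target $\log(Lr/\delta)$ factor.
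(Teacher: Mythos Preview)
Your proposal is correct and tracks the paper's proof closely: the same basic inequality from optimality of $\hat{\bx}$, the same three-term decomposition of $\tilde{\by}-f(\bA\bar{\bx})$, Cauchy--Schwarz plus the Lipschitz bounds on $f$ for the adversarial-noise and representation-error terms, and the final conversion via the lower Lipschitz bound $\|f(\bA\hat{\bx})-f(\bA\bar{\bx})\|_2\ge l\|\bA(\hat{\bx}-\bar{\bx})\|_2$ together with the TS-REC.

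The one substantive difference is the treatment of the stochastic term $\langle\bm{\eta},\,f(\bA\hat{\bx})-f(\bA\bar{\bx})\rangle$. The paper rewrites it via the mean value theorem as $\sum_i\eta_i f'(\beta_i)\langle\ba_i,\hat{\bx}-\bar{\bx}\rangle$ and applies its Hoeffding-type inequality directly with exponent $t=k\log(Lr/\delta)$; this is somewhat informal, since the coefficients depend on $\hat{\bx}$ and hence on $\bm{\eta}$, so a uniform bound over $\calK$ (via a net) is implicitly required---and is presumably the reason for that particular choice of $t$. Your explicit covering argument (union bound over a $\delta_0$-net of $\calK$, then absorbing the discretization residual through the spectral-norm assumption and the choice of $\delta_0$) makes that step fully rigorous, at the price of having to balance $\delta_0$. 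A minor organizational difference: the paper converts from $\|f(\bA\hat{\bx})-f(\bA\bar{\bx})\|_2$ to $\|\hat{\bx}-\bar{\bx}\|_2$ early, by squaring the TS-REC lower bound, and then solves the quadratic in $\|\hat{\bx}-\bar{\bx}\|_2$; you instead keep $X=\|f(\bA\hat{\bx})-f(\bA\bar{\bx})\|_2$ through the quadratic step and convert only at the end. Both orderings are valid.
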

In~\eqref{eq:ub_opt_known}, the term $\|\bar{\bx}-\bx^*\|_2$ corresponds to the representation error. Similarly to the discussion after Theorem~\ref{thm:pgd_unknown}, we see that when there is no representation error or adversarial noise, and considering $\delta$ being sufficiently small and $\sigma$ being a fixed constant, we obtain the optimal statistical rate, i.e., $\|\hat{\bx}-\bx^*\|_2 = O\big(\sqrt{\frac{k\log\frac{Lr}{\delta}}{n}}\big)$.

Theorem~\ref{thm:opt_known} is concerned with globally optimal solutions of~\eqref{eq:gen_lasso_known}, which are intractable to obtain due to the non-convexity of the corresponding objective function. In the following, we use a PGD algorithm to approximately minimize~\eqref{eq:gen_lasso_known}. In particular, we select an initial vector $\bx^{(0)}$ arbitrarily, and for any non-negative integer $t$, letting
\begin{align}
 \bx^{(t+1)} &= \calP_{\calK}\left(\bx^{(t)} - \zeta\cdot \nabla \calL_2\left(\bx^{(t)}\right)\right) \\
 &= \calP_{\calK}\left(\bx^{(t)} - \frac{\zeta}{n}\cdot \bA^T \left((f(\bA\bx^{(t)})-\tilde{\by}) \odot f'(\bA\bx^{(t)})\right)\right),\label{eq:pgd_known}
\end{align}
where $\zeta >0$ is the step size and ``$\odot$'' represents element-wise product. The corresponding algorithm is described in Algorithm~\ref{algo:pgd_known} for convenience.

\begin{algorithm}[t]
\caption{A PGD algorithm for approximately solving~\eqref{eq:gen_lasso_known} (\texttt{PGD-NLasso})}
\label{algo:pgd_known}
{\bf Input}: $\bA$, $\tilde{\by}$, $\zeta >0$, number of iterations $T$, generative model $G$, arbitrary initial vector $\bx^{(0)}$ \\
{\bf Procedure}: Iterate as~\eqref{eq:pgd_known} for $t = 0,\ldots,T-1$; return $\bx^{(T)}$
\end{algorithm}

Next, we present the following theorem, which establishes a theoretical guarantee similar to that of Theorem~\ref{thm:pgd_unknown}, except that there is an extra $\|\bar{\bx}-\bx^*\|_2$ term  corresponding to representation error in the upper bound. The proof of Theorem~\ref{thm:pgd} can be found in the supplementary material.
\begin{theorem}\label{thm:pgd}
Let $\bar{\bx}=\arg\min_{\bx \in \calK} \|\bx-\bx^*\|_2$. For any $\delta>0$ and $\epsilon \in (0,1)$ that is sufficiently small, letting
 \begin{equation}\label{eq:def_C_eps}
  \mu_2 := \max \left\{1-\zeta l^2(1-\epsilon), \zeta u^2 (1+\epsilon) -1\right\}.
 \end{equation}
 If $2\mu_2 < 1$ with $1-2\mu_2 =\Theta(1)$, we have for all $t \in \bbN$ that  
\begin{align}
 &\|\bx^{(t)}-\bx^*\|_2 \le  \left(2\mu_2\right)^t \cdot \|\bx^{(0)}-\bx^*\|_2 + O\left(\|\bar{\bx}-\bx^*\|_2 + \sigma\sqrt{\frac{k\log\frac{Lr}{\delta}}{n}}+\tau+\delta\right).
\end{align}
\end{theorem}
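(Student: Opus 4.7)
The plan is to follow a template analogous to the proof of Theorem~\ref{thm:pgd_unknown}, with the extra technical work of linearizing $f$ via the mean value theorem and carrying the associated diagonal factors with entries in $[l,u]$. Write $\bu_t := \bx^{(t+1)} - \bar{\bx}$ and $\bv_t := \bx^{(t)} - \bar{\bx}$; both belong to $\calK - \calK$ by construction. Since $\bx^{(t+1)}$ is the projection onto $\calK$ of $\bw^{(t)} := \bx^{(t)} - \zeta\nabla\calL_2(\bx^{(t)})$ and $\bar{\bx}\in\calK$, we have $\|\bx^{(t+1)} - \bw^{(t)}\|_2 \le \|\bar{\bx} - \bw^{(t)}\|_2$, which after expansion yields the key inequality
\begin{equation*}
\|\bu_t\|_2^2 \;\le\; 2\langle \bu_t,\, \bw^{(t)} - \bar{\bx}\rangle.
\end{equation*}
This inequality is what produces the factor of $2$ eventually appearing in the $2\mu_2$ contraction rate.

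Next, applying the mean value theorem componentwise gives $f(\bA\bx^{(t)}) - f(\bA\bar{\bx}) = D_t \bA\bv_t$ and $f(\bA\bar{\bx}) - f(\bA\bx^*) = \tilde{D}_t \bA(\bar{\bx}-\bx^*)$, where $D_t,\tilde{D}_t$ are diagonal with entries in $[l,u]$; also $D_t' := \diag(f'(\bA\bx^{(t)}))$ has entries in $[l,u]$. Using $\tilde{\by} = f(\bA\bx^*) + \bmeta + (\tilde{\by}-\by)$ and the form of $\nabla\calL_2$, I obtain
\begin{equation*}
\bw^{(t)} - \bar{\bx} \;=\; (\bI - \zeta \tilde{M}_t)\bv_t \;-\; \tfrac{\zeta}{n}\bA^T D_t' \tilde{D}_t\bA(\bar{\bx}-\bx^*) \;+\; \tfrac{\zeta}{n}\bA^T D_t' (\bmeta + \tilde{\by}-\by),
\end{equation*}
with $\tilde{M}_t := \tfrac{1}{n}\bA^T D_t' D_t\bA$. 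Paired with $\bu_t$, the leading term is the bilinear form $\bu_t^T (\bI - \zeta\tilde{M}_t)\bv_t$. Since $(D_t' D_t)_{ii}\in [l^2, u^2]$, for any $\bp$ the quadratic form $\bp^T \tilde{M}_t \bp$ is sandwiched by $l^2\|\bA\bp/\sqrt{n}\|_2^2$ and $u^2\|\bA\bp/\sqrt{n}\|_2^2$. Combined with TS-REC from Lemma~\ref{lem:boraTSREC}, this yields the quadratic estimate $|\bp^T(\bI - \zeta\tilde{M}_t)\bp| \le \mu_2 \|\bp\|_2^2 + O(\delta\|\bp\|_2 + \delta^2)$ for $\bp \in \calK - \calK$. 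To upgrade this into a \emph{tight} bilinear bound, I apply the scaled polarization identity with $\lambda = \|\bu_t\|_2/\|\bv_t\|_2$: the vectors $\bu_t \pm \lambda \bv_t = \bx^{(t+1)} \pm \lambda\bx^{(t)} - (1\pm\lambda)\bar{\bx}$ lie in the weighted-difference set $\{a_1\bx_1 + a_2\bx_2 + a_3\bx_3 : \bx_i\in\calK,\ a_i\in\bbR,\ \sum_i a_i = 0\}$, whose effective latent complexity is still $O(k)$. An extended version of Lemma~\ref{lem:boraTSREC} (applied to the generative prior $(\bz_1,\bz_2,\bz_3,a_1,a_2) \mapsto a_1 G(\bz_1) + a_2 G(\bz_2) - (a_1+a_2) G(\bz_3)$ over a bounded range of scalars, then extended to all scales by homogeneity of TS-REC) then produces
\begin{equation*}
\big|\bu_t^T(\bI - \zeta\tilde{M}_t)\bv_t\big| \;\le\; \mu_2 \|\bu_t\|_2\|\bv_t\|_2 + O\big(\delta(\|\bu_t\|_2 + \|\bv_t\|_2)\big)
\end{equation*}
at the cost of only a constant inflation of the sample complexity.

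The remaining three inner products are bounded routinely. The representation-error piece is controlled by Cauchy--Schwarz, the entrywise $[l,u]$ bound on $D_t'\tilde{D}_t$, TS-REC on $\bu_t$, and Johnson--Lindenstrauss concentration of $\|\bA(\bar{\bx}-\bx^*)/\sqrt{n}\|_2$ (valid since $\bar{\bx}-\bx^*$ is independent of $\bA$), giving $O(\|\bu_t\|_2\|\bar{\bx}-\bx^*\|_2)$. The sub-Gaussian-noise piece is handled via a covering argument over $\bu\in \calK-\calK$ combined with a uniform net over $\bx\in\calK$ that governs the data-dependent $D_t'$; this yields $O\big(\sigma\|\bu_t\|_2\sqrt{k\log(Lr/\delta)/n} + \delta\|\bu_t\|_2\big)$. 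The adversarial-noise piece is $O(\tau\|\bu_t\|_2)$ by Cauchy--Schwarz. Substituting into the projection inequality, canceling a factor of $\|\bu_t\|_2$, and iterating yields the one-step contraction $\|\bu_t\|_2 \le 2\mu_2\|\bv_t\|_2 + C\big(\|\bar{\bx}-\bx^*\|_2 + \sigma\sqrt{k\log(Lr/\delta)/n} + \tau + \delta\big)$; unrolling the geometric recursion (using $1-2\mu_2 = \Theta(1)$) and applying $\|\bx^{(t)}-\bx^*\|_2 \le \|\bv_t\|_2 + \|\bar{\bx}-\bx^*\|_2$ yields the theorem. The chief obstacle is the bilinear-form step: naive unscaled polarization only produces $\tfrac{\mu_2}{2}(\|\bu_t\|_2^2 + \|\bv_t\|_2^2)$, degrading the rate to $\sqrt{\mu_2/(1-\mu_2)}$; the sharp rate $2\mu_2$ hinges on the scaled polarization and the extended TS-REC on three-term weighted combinations of generative-prior vectors. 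A secondary delicate point is decoupling the data-dependent $D_t'$ from the sub-Gaussian noise $\bmeta$, which requires a uniform concentration over $\calK$.
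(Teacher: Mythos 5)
Your proposal is correct in substance and shares the paper's overall skeleton: the projection inequality $\|\bx^{(t+1)}-\bar{\bx}\|_2^2 \le 2\langle \tilde{\bx}^{(t+1)}-\bar{\bx}, \bx^{(t+1)}-\bar{\bx}\rangle$, the mean-value-theorem linearization producing diagonal factors with entries in $[l,u]$, the same four-term decomposition, and essentially identical treatments of the representation-error, sub-Gaussian, and adversarial terms (the paper likewise uses the JLE on the fixed vector $\bar{\bx}-\bx^*$ and a net-based union bound for the noise inner product). Where you genuinely diverge is the central bilinear estimate. The paper does \emph{not} use scaled polarization: its Lemma on the JLE-polarization bound gives only the unscaled form $|\frac{1}{n}\bx_1^T\bA^T\bA\bx_2 - \bx_1^T\bx_2| \le \epsilon(\|\bx_1\|_2^2+\|\bx_2\|_2^2)$, so its analogue of your bilinear bound carries the main term $\max\{1-\zeta l^2,\zeta u^2-1\}\,\|\bx^{(t)}-\bar{\bx}\|_2\|\bx^{(t+1)}-\bar{\bx}\|_2$ \emph{plus} an additive $2\zeta u^2\epsilon(\|\bx^{(t)}-\bar{\bx}\|_2^2+\|\bx^{(t+1)}-\bar{\bx}\|_2^2)$, and these quadratic $\epsilon$-terms are absorbed only at the very end by taking $\epsilon$ sufficiently small, which is exactly why the theorem's $\mu_2$ contains the $(1\pm\epsilon)$ factors. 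Your route instead sandwiches the quadratic form $\bp^T\tilde{M}_t\bp$ between $l^2$ and $u^2$ times $\|\bA\bp\|_2^2/n$ (which is the rigorous way to exploit $(D_t'D_t)_{ii}\in[l^2,u^2]$, since the individual bilinear summands can be negative) and then recovers a tight $\mu_2\|\bu_t\|_2\|\bv_t\|_2$ bound via polarization at scale $\lambda=\|\bu_t\|_2/\|\bv_t\|_2$. The price is the extended TS-REC over three-term zero-sum combinations $a_1G(\bz_1)+a_2G(\bz_2)-(a_1+a_2)G(\bz_3)$, which the paper never needs; this is plausible (two extra scalars add only $O(\log(Lr/\delta))$ to the metric entropy), but the ``extension to all scales by homogeneity'' requires care because the additive $\delta$-slack in TS-REC is not scale-invariant --- after normalizing by the larger coefficient the slack rescales to $O(\lambda\delta)$, and you must check that dividing by $4\lambda$ in the polarization identity returns it to an acceptable $O(\delta\|\bu_t\|_2)$ contribution. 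With that detail verified, both routes deliver the one-step contraction $\|\bx^{(t+1)}-\bar{\bx}\|_2 \le 2\mu_2\|\bx^{(t)}-\bar{\bx}\|_2 + O(\cdot)$ and the same unrolled conclusion; your version is arguably more self-contained at the bilinear step, while the paper's is lighter on covering machinery but leans on the final $\epsilon$-absorption.
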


\section{Experiments} 

\begin{figure}[t]
\begin{center}
\includegraphics[width=1.0\columnwidth]{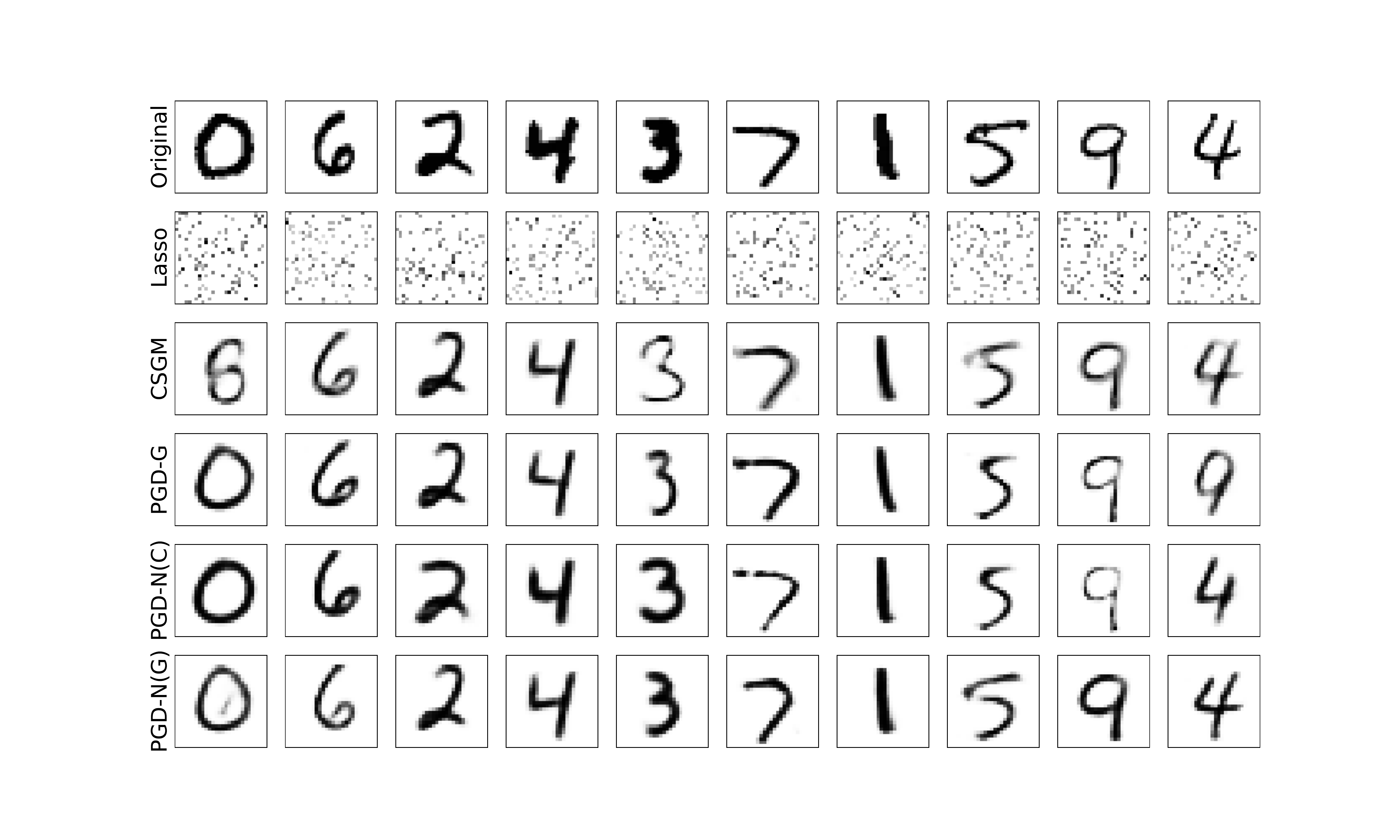}
\caption{Examples of reconstructed images of the MNIST dataset with $n = 100$ measurements and $p = 784$ dimensional vectors.} \label{fig:mnist_imgs_sign}
\end{center}
\end{figure}

In this section, we empirically evaluate the performance of Algorithm~\ref{algo:pgd_gLasso} (\texttt{PGD-GLasso}; abbreviated to~\texttt{PGD-G}) for the linear least-squares estimator~\eqref{eq:gen_lasso} and Algorithm~\ref{algo:pgd_known} (\texttt{PGD-NLasso}; abbreviated to~\texttt{PGD-N}) for the nonlinear least-squares estimator~\eqref{eq:gen_lasso_known} on the MNIST~\cite{lecun1998gradient} and CelebA~\cite{liu2015deep} datasets. For both datasets, we use the generative models pre-trained by the authors of~\cite{bora2017compressed}. For~\texttt{PGD-G}, the step size $\nu$ is set to be $1$. For~\texttt{PGD-N}, we set the step size $\zeta$ to be $0.2$. For both algorithms, the total number of iterations $T$ is set to be $30$. On the MNIST dataset, we do $5$ random restarts, and pick the best estimate among these random restarts. The generative model $G$ is set to be a variational autoencoder (VAE) model with a latent dimension of $k = 20$. The projection step $\calP_{\calK}(\cdot)$ with $\calK$ being the range of $G$ is approximated using gradient descent, performed using the Adam optimizer with a learning rate of $0.03$ and $200$ steps. On the CelebA dataset, we use a Deep Convolutional Generative Adversarial Networks (DCGAN) generative model with a latent dimension of $k = 100$. We select the best estimate among $2$ random restarts. An Adam optimizer with $100$ steps and a learning rate of $0.1$ is used for the projection operator $\calP_{\calK}(\cdot)$. Throughout this section, for simplicity, we consider the case that there is no adversarial noise, i.e., $\tau =0$. Since for unknown nonlinearity, the signal is recovered up to a scalar ambiguity, to compare performance across algorithms, we use a scale-invariant metric named Cosine Similarity defined as $\mathrm{Cos} \big(\bx^*,\bx^{(T)}\big) := \frac{\big\langle\bx^*,\bx^{(T)} \big\rangle}{\|\bx^*\|_2\|\bx^{(T)}\|_2}$, where $\bx^*$ is the signal vector to estimate, and $\bx^{(T)}$ denotes the output vector of an algorithm. We use Python 2.7 and TensorFlow 1.0.1, with a NVIDIA Tesla K80 24GB GPU.

\begin{figure}[t]
\begin{center}
\includegraphics[width=1.0\columnwidth]{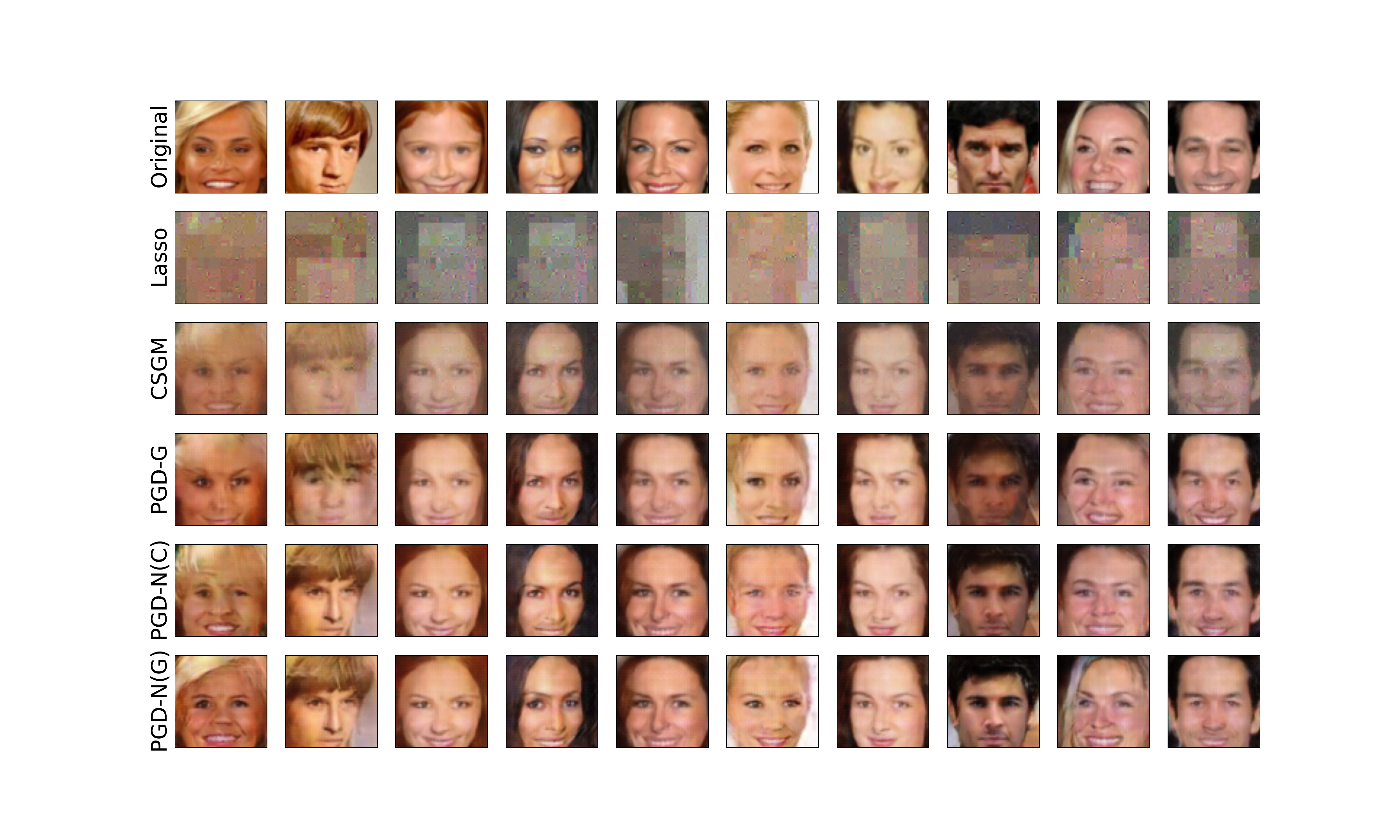}
\caption{Reconstructed images of the CelebA dataset with $n = 500$ measurements and $p = 12288$ dimensional vectors.}\label{fig:celebA_imgs} 
\end{center}
\end{figure}

We follow the measurement model~\eqref{eq:yi_known} to generate the observations, with $f(x) = 2x + 0.5 \cos(x)$ and the noises being independent zero-mean Gaussian with standard deviation $\sigma$. We observe that $f$ is smooth and monotonically increasing with $l=1.5$ and $u=2.5$. Thus, we perform~\texttt{PGD-N} on the generated data. In addition, since the assumption made for unknown nonlinearity that $f(g)$ for $g \sim \calN(0,1)$ is sub-Gaussian is satisfied, we also compare with~\texttt{PGD-G}. The standard deviation $\sigma$ is set to be $0.1$ for the MNIST dataset, and $0.01$ for the CelebA dataset. The baseline is the~\texttt{Lasso} using $2$D Discrete Cosine Transform ($2$D-DCT) basis~\cite{Tib96} and the method for linear inverse problem with generative models proposed in~\cite{bora2017compressed} (denoted by~\texttt{CSGM}). For~\texttt{Lasso},~\texttt{CSGM} and~\texttt{PGD-G}, the sensing matrix $\bA$ is assumed to contain~i.i.d. standard Gaussian entries. Moreover, since for a known and monotonic nonlinear link function, we allow for a wide variety of distributions on the sensing vectors, for~\texttt{PGD-N}, we consider both cases where $\bA$ is a standard Gaussian matrix or a partial Gaussian circulant matrix similar to that in~\cite{liu2021towards}. The corresponding Algorithm~\ref{algo:pgd_known} is denoted by~\texttt{PGD-N(G)} or \texttt{PGD-N(C)} respectively, where ``G'' refers to the standard Gaussian matrix and ``C'' refers to the partial Gaussian circulant matrix.

We perform experiments to compare the performance of these algorithms, and the reconstructed results are reported in Figures~\ref{fig:mnist_imgs_sign},~\ref{fig:celebA_imgs}, and~\ref{fig:quant}. We observe that for our settings, the sparsity-based method~\texttt{Lasso} always attains poor reconstructions, while all three generative model based PGD methods attain accurate reconstructions even when the number of measurements $n$ is small compared to the ambient dimension $p$. In addition, from Figure~\ref{fig:quant}(a), we observe that on the MINST dataset,~\texttt{PGD-N(G)} and~\texttt{PGD-N(C)} lead to similar Cosine Similarity, and they are clearly better than~\texttt{CSGM} and~\texttt{PGD-G} when $n < 300$. This is not surprising since both~\texttt{CSGM} and~\texttt{PGD-G} does not make use of the knowledge of the nonlinear link function. From Figures~\ref{fig:celebA_imgs} and~\ref{fig:quant}(b), we see that for the CelebA dataset, three generative prior based PGD methods give similar reconstructed images, with the Cosine Similarity corresponding to~\texttt{PGD-N(G)} and~\texttt{PGD-N(C)} being slightly higher than that of~\texttt{PGD-G}, and all of them lead to clearly higher Cosine Similarity compared to that of~\texttt{CSGM}.

\begin{figure}[t]
\begin{center}
\begin{tabular}{cc}
\includegraphics[height=0.18\textwidth]{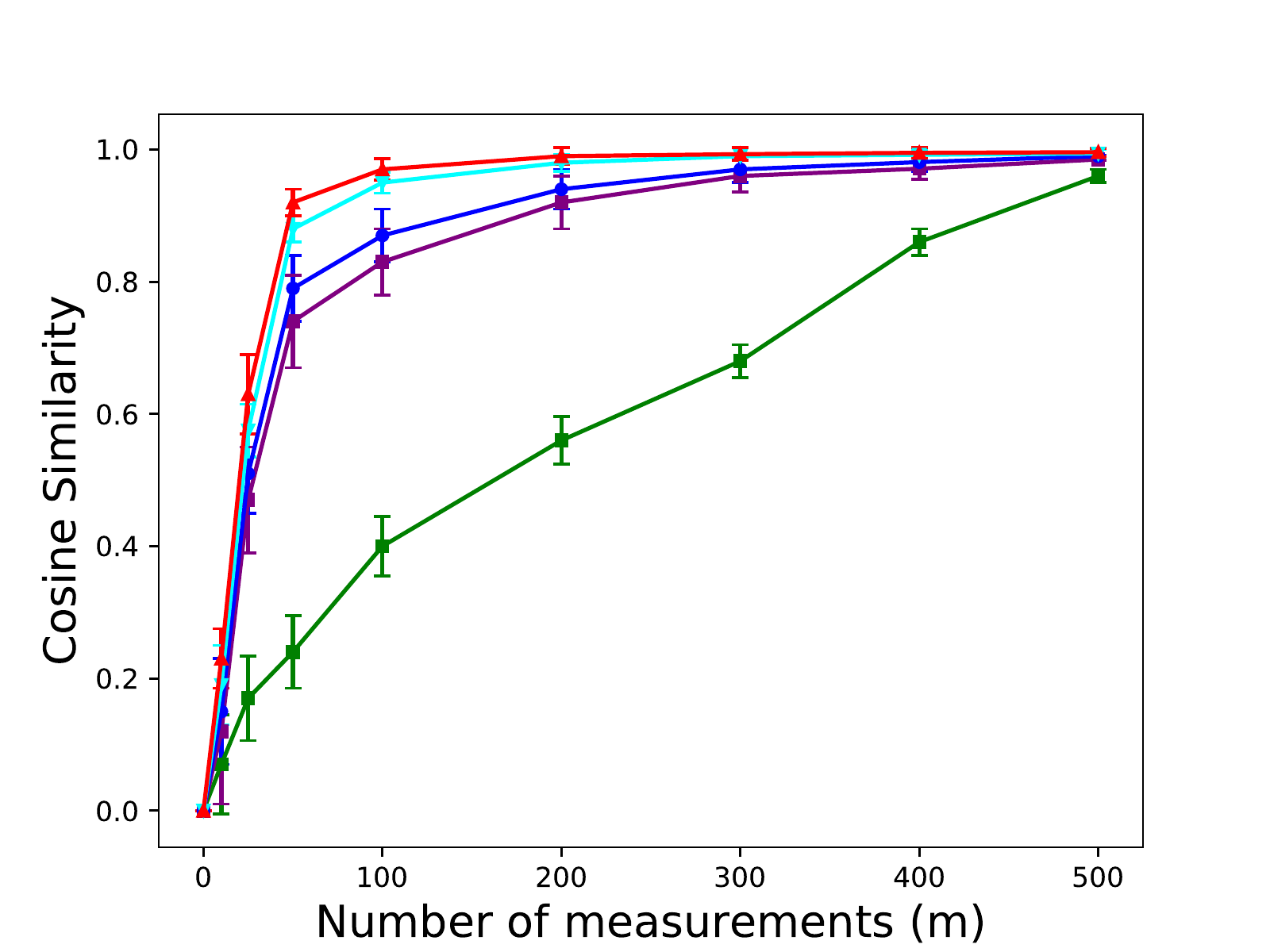} & \hspace{-0.5cm}
\includegraphics[height=0.18\textwidth]{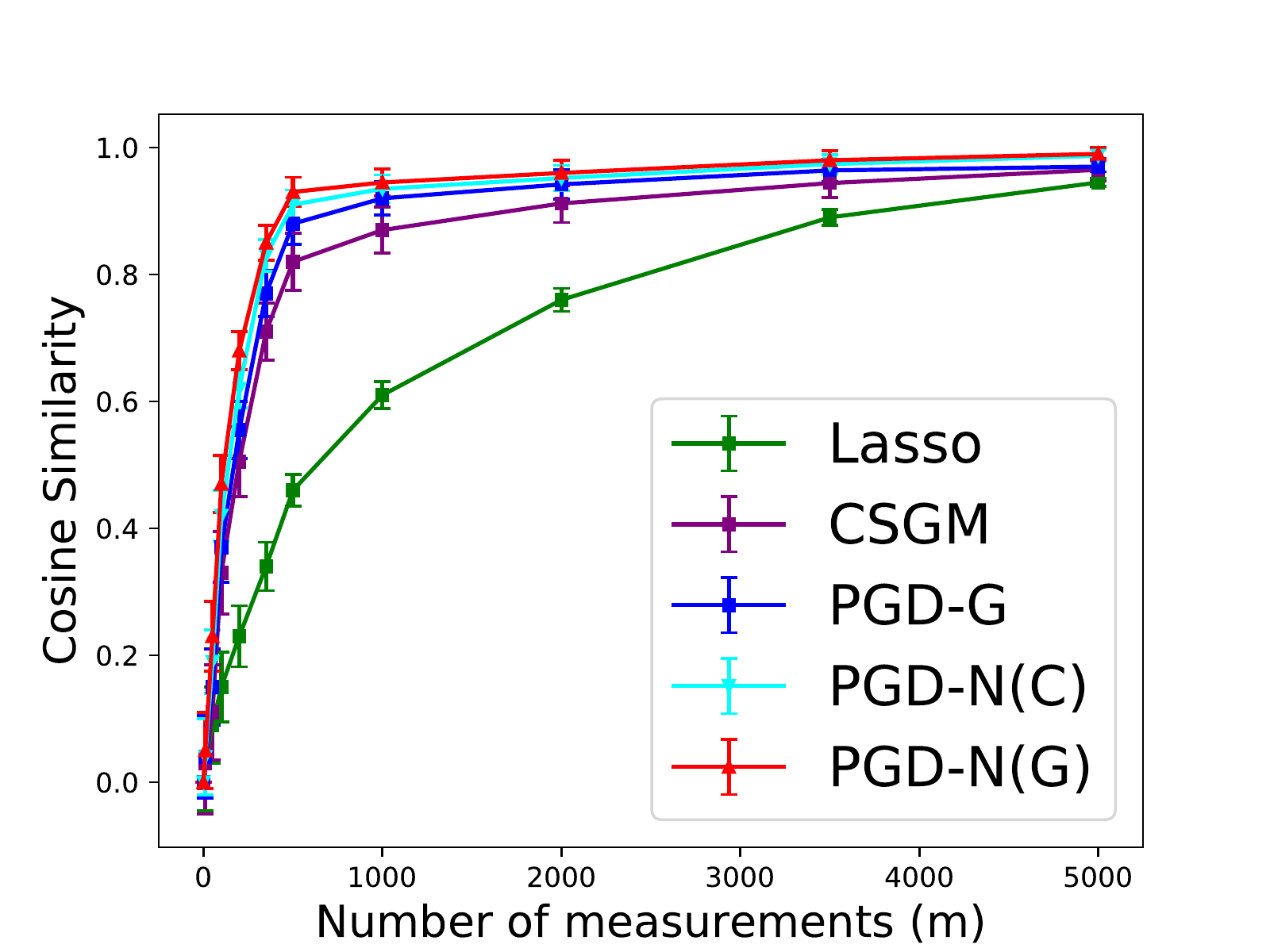} \\
{\small (a) Results on MNIST} & {\small (b) Results on CelebA}
\end{tabular}
\caption{Quantitative comparisons for the performance of \texttt{PGD-G} and \texttt{PGD-N} according to Cosine Similarity.} \label{fig:quant}
\end{center}
\end{figure} 

Numerical results for noisy $1$-bit measurements are presented in the supplementary material.

\section{Conclusion}

We have proposed PGD algorithms to solve generative model based nonlinear inverse problems, and we have provided theoretical guarantees for these algorithms for both unknown and known link functions. In particular, these algorithms are guaranteed to converge linearly to points achieving optimal statistical rate in spite of the model nonlinearity. 

\section*{Acknowledgment} We sincerely thank the three anonymous reviewers for their careful reading and insightful comments. We are extremely grateful to Dr. Jonathan Scarlett for proofreading the manuscript and giving valuable suggestions.

\newpage

\appendices

\section{Proof of Theorem~\ref{thm:pgd_unknown} (PGD for Unknown Nonlinearity)}

Before proving the theorem, we present some auxiliary lemmas.

\subsection{Auxiliary Results for Theorem~\ref{thm:pgd_unknown}}

First, we present a standard definition for sub-exponential random variables.

\begin{definition}
 A random variable $X$ is said to be sub-exponential if there exists a positive constant $C$ such that $\left(\bbE\left[|X|^q\right]\right)^{\frac{1}{q}} \le C q$ for all $q \ge 1$. The sub-exponential norm of $X$ is defined as $\|X\|_{\psi_1} := \sup_{q \ge 1} p^{-1} \left(\bbE\left[|X|^q\right]\right)^{\frac{1}{q}}$.
\end{definition}

The following lemma states that the product of two sub-Gaussian random variables is sub-exponential.
\begin{lemma}{\em (\hspace{1sp}\cite{vershynin2010introduction})}\label{lem:prod_subGs}
  Let $X$ and $Y$ be sub-Gaussian random variables (not necessarily independent). Then $XY$ is sub-exponential, and satisfies
  \begin{equation}
   \|XY\|_{\psi_1} \le \|X\|_{\psi_2}\|Y\|_{\psi_2}.
  \end{equation}
\end{lemma}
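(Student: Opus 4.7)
The plan is to bound $\|XY\|_{\psi_1}$ directly from its moment-based definition by controlling each moment $\mathbb{E}|XY|^q$ via Cauchy--Schwarz, and then using the sub-Gaussian moment bounds implicit in Definition~\ref{def:subg} to recover a linear-in-$q$ growth. The strategy is standard: split the $q$-th moment of the product into a product of $(2q)$-th moments of the individual factors, each of which can be bounded by the sub-Gaussian norm evaluated at the larger index $2q$.

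First I would fix an arbitrary $q \ge 1$ and apply the Cauchy--Schwarz inequality to write
\begin{equation}
\mathbb{E}|XY|^q \;=\; \mathbb{E}\bigl[|X|^q |Y|^q\bigr] \;\le\; \sqrt{\mathbb{E}|X|^{2q}}\,\sqrt{\mathbb{E}|Y|^{2q}}.
\end{equation}
Taking the $q$-th root gives $(\mathbb{E}|XY|^q)^{1/q} \le (\mathbb{E}|X|^{2q})^{1/(2q)}(\mathbb{E}|Y|^{2q})^{1/(2q)}$. Since $2q \ge 1$, Definition~\ref{def:subg} applied at index $2q$ yields $(\mathbb{E}|X|^{2q})^{1/(2q)} \le \sqrt{2q}\,\|X\|_{\psi_2}$, and similarly for $Y$. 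Multiplying the two factor-wise bounds gives
\begin{equation}
\bigl(\mathbb{E}|XY|^q\bigr)^{1/q} \;\le\; 2q\,\|X\|_{\psi_2}\,\|Y\|_{\psi_2}.
\end{equation}
Dividing by $q$ and taking the supremum over $q \ge 1$ in the definition of the $\psi_1$-norm then delivers an inequality of the form $\|XY\|_{\psi_1} \le C\,\|X\|_{\psi_2}\|Y\|_{\psi_2}$, in particular showing that $XY$ has a finite sub-exponential norm and is therefore sub-exponential as claimed.

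There is no real technical obstacle; the only subtle point is the numerical constant. The Cauchy--Schwarz route produces an absolute constant (here $2$) in front of $\|X\|_{\psi_2}\|Y\|_{\psi_2}$, whereas the lemma as stated displays the cleaner bound with constant $1$. This gap is a matter of normalization conventions between the moment-based sub-Gaussian/sub-exponential norms used in Definition~\ref{def:subg} and the Orlicz-type norms employed in~\cite{vershynin2010introduction}, under which the unit constant is attainable via the elementary inequality $ab \le \tfrac12(a^2+b^2)$ applied to the Orlicz functionals; either way, the bound holds up to an absolute constant, which is all that is subsequently used. I would also note that no independence assumption on $(X,Y)$ is needed, since Cauchy--Schwarz controls the joint moment by the marginal moments regardless of dependence structure.
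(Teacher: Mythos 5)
The paper offers no proof of this lemma---it is quoted verbatim from \cite{vershynin2010introduction}---and your Cauchy--Schwarz argument is exactly the standard derivation behind that citation; it is correct, and you are right that no independence of $(X,Y)$ is needed. The only discrepancy is the absolute constant ($2$ under the moment-based norms of Definition~\ref{def:subg} versus $1$ in the stated bound, the latter attainable under the Orlicz normalization via $ab\le\tfrac12(a^2+b^2)$), which you correctly diagnose and which is immaterial downstream, since the proof of Lemma~\ref{lem:aux_main_unknown} only uses $\|XY\|_{\psi_1}\le C\|X\|_{\psi_2}\|Y\|_{\psi_2}$ for an absolute constant $C$.
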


The following lemma provides a useful concentration inequality for the sum of independent zero-mean sub-exponential random variables.
\begin{lemma}{\em (\hspace{1sp}\cite[Proposition~5.16]{vershynin2010introduction})}\label{lem:large_dev}
Let $X_{1}, \ldots , X_{N}$ be independent zero-mean sub-exponential random variables, and $K = \max_{i} \|X_{i} \|_{\psi_{1}}$. Then for every $\balpha = [\alpha_1,\ldots,\alpha_N]^T \in \bbR^N$ and $\epsilon \geq 0$, it holds that
\begin{align}
 & \mathbb{P}\bigg( \Big|\sum_{i=1}^{N}\alpha_i X_{i}\Big|\ge \epsilon\bigg)  \leq 2  \exp \left(-c \cdot \mathrm{min}\Big(\frac{\epsilon^{2}}{K^{2}\|\balpha\|_2^2},\frac{\epsilon}{K\|\balpha\|_\infty}\Big)\right). \label{eq:subexp}
\end{align}
\end{lemma}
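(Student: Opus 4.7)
\section*{Proof Proposal for Lemma~\ref{lem:large_dev}}

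The plan is to follow the classical Bernstein-type argument: convert sub-exponential tail decay into an exponential moment bound, apply Markov's inequality to the moment generating function (MGF) of the weighted sum, and then optimize the free parameter in a way that yields the two-regime bound appearing in~\eqref{eq:subexp}. The factor of $2$ in the stated inequality will come from applying the one-sided bound to both $\sum_i \alpha_i X_i$ and $-\sum_i \alpha_i X_i$ and taking a union bound.

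First, I would establish the equivalent characterization that for any zero-mean sub-exponential random variable $X$ with $\|X\|_{\psi_1}\le K$, there exist absolute constants $C_0,c_0>0$ such that
\begin{equation*}
\bbE\bigl[\exp(\lambda X)\bigr] \;\le\; \exp\bigl(C_0 \lambda^2 K^2\bigr)\qquad \text{for all }|\lambda| \le c_0/K.
\end{equation*}
This is the standard MGF estimate, and it is obtained by Taylor-expanding $e^{\lambda X}$, bounding $\bbE[|X|^q]\le (CKq)^q$ coming from the definition of $\|X\|_{\psi_1}$, and using $q^q/q!\le e^q$ to sum the resulting geometric series (requiring $|\lambda|$ small enough to keep the ratio $<1$). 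Here the zero-mean assumption is needed to kill the linear term $\lambda\bbE[X]$ in the expansion.

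Next, fix $\lambda\ge 0$ with $|\lambda|\le c_0/(K\|\balpha\|_\infty)$, so that $|\lambda \alpha_i|\le c_0/K$ for every $i$. Each $\alpha_i X_i$ is zero-mean with $\|\alpha_i X_i\|_{\psi_1}\le |\alpha_i|K$, so the MGF bound above yields
\begin{equation*}
\bbE\bigl[\exp(\lambda \alpha_i X_i)\bigr] \le \exp\bigl(C_0 \lambda^2 \alpha_i^2 K^2\bigr).
\end{equation*}
By independence, $\bbE\bigl[\exp(\lambda \sum_i \alpha_i X_i)\bigr]\le \exp\bigl(C_0 \lambda^2 K^2 \|\balpha\|_2^2\bigr)$. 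Markov's inequality then gives
\begin{equation*}
\mathbb{P}\Bigl(\sum_{i=1}^N \alpha_i X_i \ge \epsilon\Bigr) \;\le\; \exp\bigl(-\lambda \epsilon + C_0 \lambda^2 K^2\|\balpha\|_2^2\bigr).
\end{equation*}

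The last step is to optimize over $\lambda\in [0,\,c_0/(K\|\balpha\|_\infty)]$. The unconstrained minimizer is $\lambda^\star = \epsilon/(2C_0 K^2\|\balpha\|_2^2)$. If $\lambda^\star$ lies in the admissible interval (the "sub-Gaussian regime", corresponding to $\epsilon/(K\|\balpha\|_\infty)\le 2C_0 c_0^{-1}\epsilon^2/(K^2\|\balpha\|_2^2)$ being false, i.e.\ the quadratic term is the binding one), substitution produces the bound $\exp\bigl(-\epsilon^2/(4C_0 K^2 \|\balpha\|_2^2)\bigr)$. Otherwise (the "exponential regime", when $\epsilon$ is large), one picks the boundary $\lambda = c_0/(K\|\balpha\|_\infty)$; because in this regime $\lambda\epsilon$ dominates the quadratic correction (one loses at most a constant factor), this yields $\exp\bigl(-c_1 \epsilon/(K\|\balpha\|_\infty)\bigr)$ for a suitable absolute constant $c_1$. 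Combining both cases produces the factor $\min\bigl(\epsilon^2/(K^2\|\balpha\|_2^2),\,\epsilon/(K\|\balpha\|_\infty)\bigr)$ inside the exponent with a single absolute constant $c$. Repeating the argument for $\{-X_i\}$ and union-bounding gives the claimed two-sided inequality with the factor $2$.

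The main obstacle is a purely bookkeeping one: tracking the absolute constants carefully through the case split so that a single constant $c$ works uniformly in both regimes, and verifying that the "exponential regime" estimate actually absorbs the quadratic correction term $C_0\lambda^2 K^2\|\balpha\|_2^2$ into a fraction of the linear savings $\lambda\epsilon$ (which it does precisely because we are in the regime where $\lambda^\star$ exceeds the boundary, i.e.\ $\epsilon \ge 2C_0 c_0^{-1} K\|\balpha\|_2^2/\|\balpha\|_\infty$). All other steps are the standard Chernoff recipe.
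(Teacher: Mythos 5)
Your proposal is correct and is essentially the standard Bernstein/Chernoff argument used to prove this result in the cited reference (Vershynin, Proposition~5.16); the paper itself offers no proof and simply cites that source. The MGF bound for centered sub-exponential variables, the factorization by independence, and the two-regime optimization of $\lambda$ (with the boundary choice absorbing the quadratic correction into half the linear term) are all handled correctly.
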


First, based on Lemmas~\ref{lem:prod_subGs} and~\ref{lem:large_dev}, we derive the following lemma.
\begin{lemma}\label{lem:aux_main_unknown}
 For any $\bx_1,\bx_2 \in \bbR^p$ and any $\epsilon \in (0,1)$, we have with probability $1-e^{-\Omega(n\epsilon^2)}$ that
 \begin{equation}\label{eq:aux_main_unknown}
  \left|\left\langle \left(\bI_p - \frac{\nu}{n}\bA^T\bA\right)\bx_1,\bx_2\right\rangle\right| \le \mu_1 \|\bx_1\|_2\|\bx_2\|_2,
 \end{equation}
where $\mu_1 := \max \{1-\nu(1-\epsilon),\nu (1+\epsilon) -1\}$.
\end{lemma}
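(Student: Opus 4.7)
The plan is to expand the quadratic form and reduce the inequality to a concentration statement about the scalar random variable $S := \tfrac{1}{n}\sum_{i=1}^n (\ba_i^T \bx_1)(\ba_i^T \bx_2)$ around its mean $\langle \bx_1, \bx_2\rangle$. Writing
\begin{equation*}
\left\langle \left(\bI_p - \frac{\nu}{n}\bA^T\bA\right)\bx_1,\bx_2\right\rangle \;=\; \langle \bx_1, \bx_2\rangle - \nu\, S \;=\; (1-\nu)\langle \bx_1,\bx_2\rangle + \nu\bigl(\langle \bx_1, \bx_2\rangle - S\bigr),
\end{equation*}
the triangle inequality together with Cauchy--Schwarz ($|\langle \bx_1,\bx_2\rangle|\le \|\bx_1\|_2\|\bx_2\|_2$) gives
\begin{equation*}
\left|\left\langle \left(\bI_p - \tfrac{\nu}{n}\bA^T\bA\right)\bx_1,\bx_2\right\rangle\right| \;\le\; |1-\nu|\,\|\bx_1\|_2\|\bx_2\|_2 + \nu\,\bigl|S - \langle \bx_1,\bx_2\rangle\bigr|.
\end{equation*}
Since $|1-\nu|+\nu\epsilon = \max\{1-\nu(1-\epsilon),\,\nu(1+\epsilon)-1\} = \mu_1$, it suffices to show $|S-\langle\bx_1,\bx_2\rangle|\le \epsilon\|\bx_1\|_2\|\bx_2\|_2$ with the claimed probability.

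Next, I would apply Bernstein's inequality (Lemma~\ref{lem:large_dev}) to the centered summands $X_i := (\ba_i^T\bx_1)(\ba_i^T\bx_2) - \langle \bx_1,\bx_2\rangle$. Since $\ba_i\sim\calN(\mathbf{0},\bI_p)$, each factor $\ba_i^T\bx_j$ is a centered Gaussian with sub-Gaussian norm $C\|\bx_j\|_2$; Lemma~\ref{lem:prod_subGs} then yields that $(\ba_i^T\bx_1)(\ba_i^T\bx_2)$, and hence its centered version $X_i$, is sub-exponential with $\|X_i\|_{\psi_1}\le C\|\bx_1\|_2\|\bx_2\|_2$ (the centering only affects the constant). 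Invoking Lemma~\ref{lem:large_dev} with $\balpha=(1,\ldots,1)\in\bbR^n$ (so $\|\balpha\|_2^2=n$, $\|\balpha\|_\infty=1$) and deviation $n\epsilon\|\bx_1\|_2\|\bx_2\|_2$ gives
\begin{equation*}
\mathbb{P}\!\left(\bigl|S-\langle \bx_1,\bx_2\rangle\bigr|\ge \epsilon\|\bx_1\|_2\|\bx_2\|_2\right)\le 2\exp\!\Bigl(-c\min\bigl(n\epsilon^2,\,n\epsilon\bigr)\Bigr)=e^{-\Omega(n\epsilon^2)},
\end{equation*}
where the last equality uses $\epsilon\in(0,1)$ so that $\epsilon^2\le \epsilon$. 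Combining with the deterministic bound above completes the argument.

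The only mildly delicate step is identifying the correct sub-exponential norm of the product $(\ba_i^T\bx_1)(\ba_i^T\bx_2)$ and then verifying that $\mu_1$ in the lemma statement is exactly $|1-\nu|+\nu\epsilon$; everything else is a direct application of the two auxiliary lemmas. No additional assumptions on $\bx_1,\bx_2$ are needed, since both sides of the inequality scale homogeneously in $\|\bx_1\|_2$ and $\|\bx_2\|_2$.
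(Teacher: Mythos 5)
Your proposal is correct and follows essentially the same route as the paper's proof: both reduce the claim to the concentration of $\frac{1}{n}\sum_{i=1}^n (\ba_i^T\bx_1)(\ba_i^T\bx_2)$ around $\bx_1^T\bx_2$ via Lemma~\ref{lem:prod_subGs} and Lemma~\ref{lem:large_dev}, and then absorb the $(1-\nu)\bx_1^T\bx_2$ term using Cauchy--Schwarz to arrive at $\mu_1 = |1-\nu|+\nu\epsilon$. Your explicit verification of the identity $|1-\nu|+\nu\epsilon = \max\{1-\nu(1-\epsilon),\nu(1+\epsilon)-1\}$ is a useful detail the paper leaves implicit, but the argument is otherwise identical.
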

\begin{proof}
 We have 
 \begin{equation}
 \left\langle\frac{1}{n}\bA^T\bA\bx_1,\bx_2\right\rangle = \frac{1}{n}\sum_{i=1}^n (\ba_i^T\bx_1)(\ba_i^T\bx_2). 
 \end{equation}
From Lemma~\ref{lem:prod_subGs}, for any $i \in [n]$, $(\ba_i^T\bx_1)(\ba_i^T\bx_2)$ is sub-exponential with mean $\bx_1^T \bx_2$ and the sub-exponential norm being upper bounded by $C\|\bx_1\|_2\|\bx_2\|_2$. From Lemma~\ref{lem:large_dev}, we have with probability $1-e^{-\Omega(n\epsilon^2)}$ that
\begin{align}
 &\left|\left\langle\frac{1}{n}\bA^T\bA\bx_1,\bx_2\right\rangle - \bx_1^T\bx_2\right|  = \left|\frac{1}{n}\sum_{i=1}^n \left((\ba_i^T\bx_1)(\ba_i^T\bx_2) - \bx_1^T\bx_2\right)\right| \le \epsilon\|\bx_1\|_2\|\bx_2\|_2. 
\end{align}
This implies 
\begin{align}
 \nu \bx_1^T\bx_2 - \nu \epsilon\|\bx_1\|_2\|\bx_2\|_2 & \le \left\langle\frac{\nu}{n}\bA^T\bA\bx_1,\bx_2\right\rangle \le \nu \bx_1^T\bx_2 + \nu \epsilon\|\bx_1\|_2\|\bx_2\|_2,
\end{align}
and thus
\begin{align}
 & (1-\nu) \bx_1^T\bx_2 - \nu \epsilon\|\bx_1\|_2\|\bx_2\|_2 \le \left\langle \left(\bI_p - \frac{\nu}{n}\bA^T\bA\right)\bx_1,\bx_2\right\rangle  \le (1-\nu) \bx_1^T\bx_2 + \nu \epsilon\|\bx_1\|_2\|\bx_2\|_2. 
\end{align}
Using the inequality $|\bx_1^T\bx_2| \le \|\bx_1\|_2\|\bx_2\|_2$, we obtain~\eqref{eq:aux_main_unknown}. 
\end{proof}

Next, we present the following useful lemma.
\begin{lemma}{\em \hspace{1sp}\cite[Lemma~3]{liu2020generalized}}\label{lem:bxStarbxHat}
 Fix any $\bs \in \bbR^p$ satisfying $\|\bs\|_2=1$ and let $\bar{\by} := f(\bA \bs)$. Suppose that some $\tilde{\bx} \in \calK$ is selected depending on $\bar{\by}$ and $\bA$. For any $\delta >0$, if $Lr = \Omega(\delta p)$ and $n = \Omega\left(k \log \frac{Lr}{\delta}\right)$, then with probability $1-e^{-\Omega\left(k \log \frac{Lr}{\delta}\right)}$, it holds that
 \begin{align}
  &\left\langle \frac{1}{n}\bA^T(\bar{\by} - \mu\bA \bs), \tilde{\bx}-\mu \bs \right \rangle \le O\left(\psi \sqrt{\frac{k\log \frac{Lr}{\delta}}{n}}\right)\|\tilde{\bx} - \mu\bs\|_2 + O\left(\delta\psi\sqrt{\frac{k\log \frac{Lr}{\delta}}{n}}\right).
 \end{align}
\end{lemma}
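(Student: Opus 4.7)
\textbf{Proof proposal for Lemma~\ref{lem:bxStarbxHat}.} The inequality is a uniform concentration bound on a centered empirical process indexed by $\calK-\mu\bs$, so my plan is to combine a direct sub-exponential tail bound (via Lemmas~\ref{lem:prod_subGs}--\ref{lem:large_dev}) with a carefully scaled covering of $\calK$ obtained from the $L$-Lipschitz generator. First I would rewrite
\[
\left\langle \tfrac{1}{n}\bA^T(\bar{\by}-\mu\bA\bs),\,\tilde{\bx}-\mu\bs\right\rangle
=\tfrac{1}{n}\sum_{i=1}^n e_i\,\ba_i^T(\tilde{\bx}-\mu\bs),\qquad e_i:=f(\ba_i^T\bs)-\mu\,\ba_i^T\bs,
\]
and verify that $\bbE[e_i\ba_i]=\mathbf{0}$. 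This follows by decomposing $\ba_i=(\ba_i^T\bs)\bs+\bw_i$ with $\bw_i\perp\bs$ and independent of $\ba_i^T\bs$, so $\bbE[f(\ba_i^T\bs)\ba_i]=\mu\bs$ by the definition~\eqref{eq:mu_def} of $\mu$, which cancels the other term. Because $f(\ba_i^T\bs)$ is sub-Gaussian with norm $\psi$ and $\ba_i^T\bs\sim\calN(0,1)$, the variable $e_i$ is sub-Gaussian with $\|e_i\|_{\psi_2}=O(\psi)$ (absorbing $\mu\le C\psi$).

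Next, for any \emph{fixed} $\bv\in\bbR^p$, Lemma~\ref{lem:prod_subGs} gives $\|e_i\,\ba_i^T\bv\|_{\psi_1}\le C\psi\|\bv\|_2$, and Lemma~\ref{lem:large_dev} with $\balpha=\tfrac{1}{n}\bone$ yields, for any $u>0$,
\[
\bbP\!\left(\Big|\tfrac{1}{n}\textstyle\sum_i e_i\,\ba_i^T\bv\Big|\ge C\psi\|\bv\|_2\sqrt{u/n}\right)\le 2e^{-cu}.
\]
I would then construct a $(\delta/(L\sqrt{p}))$-net $\calN$ of $B_2^k(r)$ (so $|\calN|\le(3rL\sqrt{p}/\delta)^k$) and use Lipschitz continuity of $G$ to get a $(\delta/\sqrt{p})$-net $G(\calN)$ of $\calK$. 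The assumption $Lr=\Omega(\delta p)$ gives $\log p\le\log(Lr/\delta)+O(1)$, so $\log|\calN|=O(k\log(Lr/\delta))$. Taking $u=C'k\log(Lr/\delta)$ with $C'$ large and union bounding, with probability at least $1-e^{-\Omega(k\log(Lr/\delta))}$ the event
\[
\Big|\tfrac{1}{n}\!\sum_i e_i\,\ba_i^T(\bx'-\mu\bs)\Big|\le C\psi\|\bx'-\mu\bs\|_2\sqrt{\tfrac{k\log(Lr/\delta)}{n}}\quad\forall\,\bx'\in G(\calN)
\]
holds, and, on the same high-probability event, a separate Bernstein argument over an $\tfrac14$-net of $S^{p-1}$ gives $\|\bz\|_2:=\|\tfrac1n\bA^T(\bar{\by}-\mu\bA\bs)\|_2\le O(\psi\sqrt{p/n})$.

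Finally, for the arbitrary data-dependent $\tilde{\bx}\in\calK$, I pick $\tilde{\bx}'\in G(\calN)$ with $\|\tilde{\bx}-\tilde{\bx}'\|_2\le\delta/\sqrt{p}$ and split
\[
\langle\bz,\tilde{\bx}-\mu\bs\rangle
=\langle\bz,\tilde{\bx}'-\mu\bs\rangle+\langle\bz,\tilde{\bx}-\tilde{\bx}'\rangle.
\]
The first summand is controlled by the net bound and the triangle inequality $\|\tilde{\bx}'-\mu\bs\|_2\le\|\tilde{\bx}-\mu\bs\|_2+\delta/\sqrt{p}$, producing the $O\!\big(\psi\sqrt{k\log(Lr/\delta)/n}\big)\|\tilde{\bx}-\mu\bs\|_2$ term plus an $O(\psi\delta\sqrt{k\log(Lr/\delta)/(np)})$ remainder; the second is bounded by Cauchy--Schwarz as $\|\bz\|_2\cdot\delta/\sqrt{p}\le O(\psi\delta/\sqrt{n})$. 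Both residuals are dominated by $O(\delta\psi\sqrt{k\log(Lr/\delta)/n})$ (recall $k\log(Lr/\delta)\ge1$), which yields the claimed bound. \emph{The main obstacle} is precisely this residual control: a naive $(\delta/L)$-net would leave a $\delta\|\bz\|_2$ remainder of order $\delta\psi\sqrt{p/n}$, which need not scale like $\delta\psi\sqrt{k\log(Lr/\delta)/n}$; the fix is to refine the latent-space net by an extra $\sqrt{p}$ factor, which only inflates the log-cardinality by the harmless $\tfrac12\log p$ absorbed by the hypothesis $Lr=\Omega(\delta p)$.
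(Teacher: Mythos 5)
The paper does not prove this lemma at all --- it is imported verbatim from~\cite[Lemma~3]{liu2020generalized} --- so there is no in-paper argument to compare against; your reconstruction follows what is essentially the standard route for such bounds. The key steps are all present and correct: the orthogonal decomposition $\ba_i=(\ba_i^T\bs)\bs+\bw_i$ showing that the effective noise $e_i=f(\ba_i^T\bs)-\mu\,\ba_i^T\bs$ satisfies $\bbE[e_i\ba_i]=\bzero$ (with $|\mu|\le C\psi$ by Cauchy--Schwarz so that $\|e_i\|_{\psi_2}=O(\psi)$), the Bernstein bound of Lemma~\ref{lem:large_dev} applied to the sub-exponential products $e_i\,\ba_i^T\bv$ at fixed $\bv$, the union bound over a net of $\calK$ pulled back through the $L$-Lipschitz generator, and a crude bound on $\|\bz\|_2$ to absorb the discretization residual. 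The data-dependence of $\tilde{\bx}$ is handled correctly by the uniformity over the net, and using $Lr=\Omega(\delta p)$ to absorb the $\log p$ inflation of the net cardinality is exactly the right observation.

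One step is imprecise: the claim that a Bernstein argument over a $\tfrac14$-net of $S^{p-1}$ yields $\|\bz\|_2=O(\psi\sqrt{p/n})$. In the regime of interest $p\gg n$, driving the per-point failure probability below $9^{-p}$ forces the \emph{linear} branch of the tail in Lemma~\ref{lem:large_dev}, and that route only gives $\|\bz\|_2=O(\psi\,p/n)$; combined with your $\delta/\sqrt{p}$ net this leaves a residual $O(\psi\delta\sqrt{p}/n)$, which is not dominated by $O\big(\delta\psi\sqrt{k\log(Lr/\delta)/n}\big)$ without further assumptions. Two easy repairs: either bound $\|\bz\|_2\le\frac{1}{n}\|\bA\|_{2\to 2}\,\|\bar{\by}-\mu\bA\bs\|_2=O\big(\frac{\sqrt{p}+\sqrt{n}}{n}\cdot\psi\sqrt{n}\big)$ using the Gaussian operator-norm bound together with the high-probability bound $\|\bar{\by}-\mu\bA\bs\|_2=O(\psi\sqrt{n})$ (sum of sub-exponential $e_i^2$), which does give $O(\psi\sqrt{p/n})$ for $p\ge n$; or keep the cruder $O(\psi p/n)$ and refine the latent net to resolution $\delta/(Lp)$, which costs only an extra $k\log p=O\big(k\log\frac{Lr}{\delta}\big)$ in the union bound (again by $Lr=\Omega(\delta p)$) and shrinks the residual to $O(\psi\delta/n)$. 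With either fix the argument is complete.
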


With the above lemmas in place, we are now ready to prove Theorem~\ref{thm:pgd_unknown}.

\subsection{Proof of Theorem~\ref{thm:pgd_unknown}}
\label{app:proofThm1}

For $t \ge 0$, let 
 \begin{equation}
  \tilde{\bx}^{(t+1)} = \bx^{(t)} - \frac{\nu}{n}  \bA^T \left(\bA \bx^{(t)} - \tilde{\by}\right). 
 \end{equation}
Because $\bx^{(t+1)} = \calP_{\calK}\big(\tilde{\bx}^{(t+1)}\big)$ and $\mu\bx^* \in \calK$, we have
\begin{equation}
 \|\tilde{\bx}^{(t+1)} - \bx^{(t+1)}\|_2^2 \le \|\tilde{\bx}^{(t+1)} - \mu\bx^*\|_2^2.
\end{equation}
Equivalently, 
\begin{equation}
 \left\|\left(\tilde{\bx}^{(t+1)} - \mu\bx^*\right) + \left(\mu \bx^* - \bx^{(t+1)}\right) \right\|_2^2 \le \left\|\tilde{\bx}^{(t+1)} - \mu\bx^*\right\|_2^2,
\end{equation}
which gives
\begin{equation}
 \left\|\mu \bx^* - \bx^{(t+1)} \right\|_2^2 + 2\left\langle\tilde{\bx}^{(t+1)} - \mu\bx^*, \mu \bx^* - \bx^{(t+1)}\right\rangle \le 0.
\end{equation}
Therefore,
\begin{align}
  &\|\bx^{(t+1)} - \mu\bx^*\|_2^2 \le 2 \left\langle\tilde{\bx}^{(t+1)} - \mu \bx^*,  \bx^{(t+1)} - \mu\bx^*\right\rangle \\
  & = 2 \left\langle \bx^{(t)} - \mu \bx^* - \frac{\nu}{n} \bA^T \left(\bA \bx^{(t)} - \tilde{\by}\right),  \bx^{(t+1)} - \mu\bx^*\right\rangle \\
  & = 2 \left\langle \bx^{(t)} - \mu \bx^* - \frac{\nu}{n}  \bA^T \left(\bA \bx^{(t)} - \mu \bA \bx^*\right),  \bx^{(t+1)} - \mu\bx^*\right\rangle - 2 \left\langle \frac{\nu}{n}  \bA^T \left(\mu \bA \bx^* - \by \right),  \bx^{(t+1)} - \mu\bx^*\right\rangle \nonumber \\
  & \indent - 2 \left\langle \frac{\nu}{n}  \bA^T \left(\by - \tilde{\by} \right),  \bx^{(t+1)} - \mu\bx^*\right\rangle \\
  & = 2 \left\langle \left(\bI_p - \frac{\nu}{n} \bA^T\bA\right)( \bx^{(t)} - \mu \bx^*),  \bx^{(t+1)} - \mu\bx^*\right\rangle + 2\nu \left\langle \frac{1}{n} \bA^T \left(\by -\mu \bA \bx^* \right),  \bx^{(t+1)} - \mu\bx^*\right\rangle \nonumber \\
  & \indent + 2 \nu \left\langle \frac{1}{n}  \bA^T \left(\tilde{\by} - \by \right),  \bx^{(t+1)} - \mu\bx^*\right\rangle. \label{eq:main_ineq_unknown}
\end{align}
From Lemma~\ref{lem:bxStarbxHat}, we obtain that for any $\delta >0$, if $Lr = \Omega(\delta p)$ and $n =\Omega\big(k \log \frac{Lr}{\delta}\big)$, then with probability $1-e^{-\Omega\big(k \log \frac{Lr}{\delta}\big)}$, it holds that 
\begin{equation}\label{eq:genLasso_ineq_unknown}
\left| \left\langle \frac{1}{n} \bA^T \left(\by -\mu \bA \bx^* \right),  \bx^{(t+1)} - \mu\bx^*\right\rangle\right| \le O\left(\psi \sqrt{\frac{k \log\frac{Lr}{\delta}}{n}}\right) \|\bx^{(t+1)} - \mu\bx^*\|_2 + O\left(\delta\psi \sqrt{\frac{k \log\frac{Lr}{\delta}}{n}}\right).
\end{equation}
In addition, from the Cauchy-Schwarz inequality, we obtain with probability that $1-e^{-\Omega(n)}$ that 
\begin{align}
 \left|\left\langle \frac{1}{n}  \bA^T \left(\tilde{\by} - \by \right),  \bx^{(t+1)} - \mu\bx^*\right\rangle\right| &\le \left\|\frac{1}{\sqrt{n}}(\tilde{\by} - \by)\right\|_2 \cdot \left\|\frac{1}{\sqrt{n}}\bA(\bx^{(t+1)} - \mu\bx^*)\right\|_2 \\
 & \le \tau \cdot O(\|\bx^{(t+1)} - \mu\bx^*\|_2 +\delta),\label{eq:adverNoise_ineq_unknown}
\end{align}
where~\eqref{eq:adverNoise_ineq_unknown} is from setting $\alpha = 0.5$ in Lemma~\ref{lem:boraTSREC}. From~\eqref{eq:main_ineq_unknown}, we observe that it remains to derive an upper bound for $\big|\big\langle \big(\bI_p - \frac{\nu}{n} \bA^T\bA\big)( \bx^{(t)} - \mu \bx^*),  \bx^{(t+1)} - \mu\bx^*\big\rangle\big|$. To achieve this goal, we consider using a chain of nets. For any positive integer $q$, let $M = M_0 \subseteq M_1 \subseteq \ldots \subseteq M_q$ be a chain of nets of $B_2^k(r)$ such that $M_i$ is a $\frac{\delta_i}{L}$-net with $\delta_i = \frac{\delta}{2^i}$. There exists such a chain of nets with \cite[Lemma~5.2]{vershynin2010introduction}
    \begin{equation}
        \log |M_i| \le k \log\frac{4Lr}{\delta_i}. \label{eq:net_size_main}
    \end{equation}
    Then, by the $L$-Lipschitz assumption on $G$, we have for any $i \in [q]$ that $G(M_i)$ is a $\delta_i$-net of $G(B_2^k(r))$. For $\bx^{(t)},\bx^{(t+1)} \in G(B_2^k(r))$, we write 
    \begin{align}
     \bx^{(t)} &= (\bx^{(t)} - \bu_q) + \sum_{i=1}^q (\bu_i - \bu_{i-1}) + \bu_0,\\
     \bx^{(t+1)}& = (\bx^{(t+1)} - \bv_q) + \sum_{i=1}^q (\bv_i - \bv_{i-1}) + \bv_0,
    \end{align}
    where for $i \in [q]$, $\bu_i,\bv_i \in  G(M_i)$, $\|\bx^{(t)}- \bu_q\|_2 \le \frac{\delta}{2^q}$, $\|\bx^{(t+1)}- \bv_q\|_2 \le \frac{\delta}{2^q}$, $\|\bu_i - \bu_{i-1}\|_2 \le \frac{\delta}{2^{i-1}}$, $\|\bv_i - \bv_{i-1}\|_2 \le \frac{\delta}{2^{i-1}}$. The triangle inequality yields 
    \begin{equation}\label{eq:ttp1_u0v0}
     \|\bx^{(t)}- \bu_0\|_2 \le 2\delta, \quad \|\bx^{(t+1)}- \bv_0\|_2 \le 2\delta.
    \end{equation}
    Let $\bW_\nu = \bI_p - \frac{\nu}{n} \bA^T\bA$. Then, we have 
    \begin{align}
     & \left\langle \left(\bI_p - \frac{\nu}{n} \bA^T\bA\right)( \bx^{(t)} - \mu \bx^*),  \bx^{(t+1)} - \mu\bx^*\right\rangle = \left\langle \bW_\nu ( \bx^{(t)} - \mu \bx^*),  \bx^{(t+1)} - \mu\bx^*\right\rangle \\
     & = \left\langle \bW_\nu ( \bx^{(t)} - \bu_q),  \bx^{(t+1)} - \mu\bx^*\right\rangle  + \left\langle \bW_\nu ( \bu_0 - \mu \bx^*),  \bx^{(t+1)} - \mu\bx^*\right\rangle + \sum_{i=1}^q\left\langle \bW_\nu ( \bu_{i} - \bu_{i-1}),  \bx^{(t+1)} - \mu\bx^*\right\rangle.\label{eq:ineq_threeTerms_unknown}
    \end{align}
In the following, we control the three terms in~\eqref{eq:ineq_threeTerms_unknown} separately.
\begin{itemize}
 \item The first term: We have with probability at least $1-e^{-n/2}$ that $\|\bA\|_{2\to 2} \le 2\sqrt{n} + \sqrt{p}$~\cite[Corollary~5.35]{vershynin2010introduction}, which implies $\|\bW_\nu\|_{2\to 2} = \big\|\bI_p - \frac{\nu}{n} \bA^T\bA\big\|_{2\to 2} = O\big(\frac{p}{n}\big)$. Therefore,
 \begin{align}
  \left|\left\langle \bW_\nu ( \bx^{(t)} - \bu_q),  \bx^{(t+1)} - \mu\bx^*\right\rangle\right| & \le \|\bW_\nu\|_{2\to 2} \cdot \|\bx^{(t)} - \bu_q\|_2 \cdot \|\bx^{(t+1)} - \mu\bx^*\|_2 \\
  & \le O\big(\frac{p}{n}\big) \cdot \frac{\delta}{2^q}\cdot \|\bx^{(t+1)} - \mu\bx^*\|_2. 
 \end{align}
Choosing $q = \lceil \log_2 p \rceil$, we obtain
\begin{equation}\label{eq:threeTerms_unknown_first}
 \left|\left\langle \bW_\nu ( \bx^{(t)} - \bu_q),  \bx^{(t+1)} - \mu\bx^*\right\rangle\right| \le C \delta \|\bx^{(t+1)} - \mu\bx^*\|_2. 
\end{equation}

\item The second term: We have 
\begin{align}
 \left\langle \bW_\nu ( \bu_0 - \mu \bx^*),  \bx^{(t+1)} - \mu\bx^*\right\rangle & = \left\langle \bW_\nu ( \bu_0 - \mu \bx^*),  \bx^{(t+1)} - \bv_q\right\rangle + \left\langle \bW_\nu ( \bu_0 - \mu \bx^*),  \bv_0 - \mu\bx^*\right\rangle  \nonumber \\
 & \indent + \sum_{i=1}^q \left\langle \bW_\nu ( \bu_0 - \mu \bx^*),  \bv_i - \bv_{i-1}\right\rangle.
\end{align}
Similarly to~\eqref{eq:threeTerms_unknown_first}, we have that when $q = \lceil \log_2 p \rceil$, with probability at least $1-e^{-n/2}$,  
\begin{equation}\label{eq:threeTerms_unknown_second_first}
 \left|\left\langle \bW_\nu ( \bu_0 - \mu \bx^*),  \bx^{(t+1)} - \bv_q\right\rangle\right| \le C \delta \|\bu_0 - \mu \bx^*\|_2 \le C \delta (\|\bx^{(t)} - \mu \bx^*\|_2  + 2\delta).
\end{equation}
From Lemma~\ref{lem:aux_main_unknown}, and taking a union bound over $G(M) \times G(M)$, we have that if $n = \Omega\big(\frac{k}{\epsilon^2}\log \frac{Lr}{\delta}\big)$, then with probability $1-e^{-\Omega(n\epsilon^2)}$, 
\begin{align}
 \left\langle \bW_\nu ( \bu_0 - \mu \bx^*),  \bv_0 - \mu\bx^*\right\rangle & \le \mu_1 \|\bu_0 - \mu \bx^*\|_2 \|\bv_0 - \mu\bx^*\|_2 \\
 & \le\mu_1 (\|\bx^{(t)} - \mu \bx^*\|_2+ 2\delta) (\|\bx^{(t+1)} - \mu\bx^*\|_2 +2\delta),\label{eq:threeTerms_unknown_second_second}
\end{align}
where $\mu_1 = \max \{1-\nu(1-\epsilon),\nu (1+\epsilon) -1\}$. 
Moreover, using Lemma~\ref{lem:aux_main_unknown} and the results in the chaining argument in~\cite{bora2017compressed,liu2020sample}, we have that when $n = \Omega\big(\frac{k}{\epsilon^2}\log \frac{Lr}{\delta}\big)$, with probability $1-e^{-\Omega(n\epsilon^2)}$, 
\begin{equation}\label{eq:threeTerms_unknown_second_third}
 \left|\sum_{i=1}^q \left\langle \bW_\nu ( \bu_0 - \mu \bx^*),  \bv_i - \bv_{i-1}\right\rangle\right| \le C \delta \|\bu_0 - \mu \bx^*\|_2 \le C \delta (\|\bx^{(t)} - \mu \bx^*\|_2  + 2\delta).
\end{equation}

\item The third term: Again, using Lemma~\ref{lem:aux_main_unknown} and the results in the chaining argument in~\cite{bora2017compressed,liu2020sample}, we have that when $n = \Omega\big(\frac{k}{\epsilon^2}\log \frac{Lr}{\delta}\big)$, with probability $1-e^{-\Omega(n\epsilon^2)}$, 
\begin{equation}\label{eq:threeTerms_unknown_third}
 \left|\sum_{i=1}^q\left\langle \bW_\nu ( \bu_{i} - \bu_{i-1}),  \bx^{(t+1)} - \mu\bx^*\right\rangle\right| \le C \delta (\|\bx^{(t+1)} - \mu\bx^*\|_2 +2\delta). 
\end{equation}
\end{itemize}
Combining~\eqref{eq:main_ineq_unknown},~\eqref{eq:genLasso_ineq_unknown},~\eqref{eq:adverNoise_ineq_unknown},~\eqref{eq:ineq_threeTerms_unknown},~\eqref{eq:threeTerms_unknown_first},~\eqref{eq:threeTerms_unknown_second_first},~\eqref{eq:threeTerms_unknown_second_second},~\eqref{eq:threeTerms_unknown_second_third}, and~\eqref{eq:threeTerms_unknown_third}, we obtain that when $Lr= \Omega(\delta p)$ and $n = \Omega\big(\frac{k}{\epsilon^2}\log \frac{Lr}{\delta}\big)$, with probability $1-e^{-\Omega(n\epsilon^2)}$, 
\begin{equation}\label{eq:combining_unknown}
 \|\bx^{(t+1)}- \mu \bx^*\|_2^2 \le \left(2\mu_1\|\bx^{(t)}- \mu \bx^*\|_2 + O\left(\psi\sqrt{\frac{k\log\frac{Lr}{\delta}}{n}} +\delta + \tau\right)\right) (\|\bx^{(t+1)}- \mu \bx^*\|_2 + C\delta),
\end{equation}
which gives
\begin{equation}\label{eq:origin_54}
 \|\bx^{(t+1)}- \mu \bx^*\|_2 \le 2\mu_1\|\bx^{(t)}- \mu \bx^*\|_2 + O\left(\psi\sqrt{\frac{k\log\frac{Lr}{\delta}}{n}} +\delta + \tau\right),
\end{equation}
and leads to our desired result.\footnote{~\eqref{eq:origin_54} is of the form $z_{t} \le \alpha_1 z_{t-1} + \alpha_2$ (with $\alpha_1 = 2\mu_1 <1$). Then, we get $z_{t} \le \alpha_1^2 z_{t-2} + (1+\alpha_1) \alpha_2$, and finally, $z_t \le \alpha_1^t z_0 + (1+\alpha_1+\ldots+\alpha_1^{t-1})\alpha_2 = \alpha_1^t z_0 + \frac{1-\alpha_1^t}{1-\alpha_1} \cdot \alpha_2 \le \alpha_1^t z_0 + \frac{1}{1-\alpha_1} \cdot \alpha_2 = \alpha_1^t z_0 + O(\alpha_2)$, where we use the assumption that $1-2\mu_1 = \Theta(1)$ in the last equality.}

\section{Proof of Theorem~\ref{thm:opt_known} (Optimal Solutions for Known Nonlinearity)}

Before proving the theorem, we present some auxiliary lemmas.

\subsection{Auxiliary Results for Theorem~\ref{thm:opt_known}}

First, from the mean value theorem (MVT), we have the following simple lemma.
\begin{lemma}\label{lem:simple_lu}
 For any $\bx_1,\bx_2 \in \bbR^p$, we have 
 \begin{equation}
  l \|\bA(\bx_1 -\bx_2)\|_2 \le \|f(\bA\bx_1)-f(\bA\bx_2)\|_2 \le u \|\bA(\bx_1 -\bx_2)\|_2.
 \end{equation}
\end{lemma}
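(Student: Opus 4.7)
The plan is to prove this lemma by a straightforward coordinate-wise application of the mean value theorem, exploiting the fact that $f'$ is bounded below by $l$ and above by $u$ everywhere on $\bbR$.

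First, I would fix $\bx_1, \bx_2 \in \bbR^p$ and note that $f(\bA\bx_1)$ and $f(\bA\bx_2)$ are vectors in $\bbR^n$ whose $i$-th coordinates are $f(\ba_i^T\bx_1)$ and $f(\ba_i^T\bx_2)$ respectively. For each index $i \in [n]$, since $f$ is differentiable on $\bbR$, the MVT gives some $\xi_i$ lying between $\ba_i^T\bx_1$ and $\ba_i^T\bx_2$ such that
\begin{equation}
f(\ba_i^T\bx_1) - f(\ba_i^T\bx_2) = f'(\xi_i)\cdot \ba_i^T(\bx_1 - \bx_2).
\end{equation}
By the standing assumption $f'(x) \in [l,u]$ for all $x \in \bbR$, the multiplier $f'(\xi_i)$ lies in $[l,u]$, so taking absolute values yields
\begin{equation}
l\,|\ba_i^T(\bx_1-\bx_2)| \le |f(\ba_i^T\bx_1) - f(\ba_i^T\bx_2)| \le u\,|\ba_i^T(\bx_1-\bx_2)|.
\end{equation}

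Second, I would square this bound, sum over $i \in [n]$, and recognize the resulting sums as the squared Euclidean norms $\|f(\bA\bx_1) - f(\bA\bx_2)\|_2^2$ and $\|\bA(\bx_1-\bx_2)\|_2^2$. Taking square roots of the inequality
\begin{equation}
l^2\,\|\bA(\bx_1-\bx_2)\|_2^2 \le \|f(\bA\bx_1) - f(\bA\bx_2)\|_2^2 \le u^2\,\|\bA(\bx_1-\bx_2)\|_2^2
\end{equation}
and using $l,u > 0$ then delivers the claim. There is essentially no obstacle here; the only thing to be careful about is noting that the intermediate point $\xi_i$ in MVT depends on the index $i$ (and on $\bx_1,\bx_2$), but this dependence is harmless because the bound $f'(\xi_i) \in [l,u]$ holds uniformly.
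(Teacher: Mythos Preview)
Your proposal is correct and follows essentially the same argument as the paper: a coordinate-wise application of the mean value theorem to obtain $f(\ba_i^T\bx_1)-f(\ba_i^T\bx_2)=f'(\xi_i)\,\ba_i^T(\bx_1-\bx_2)$, followed by squaring, summing over $i$, and using $f'(\xi_i)\in[l,u]$. The paper's proof is slightly terser (it works directly with the squared norm and only writes out the upper bound explicitly), but there is no substantive difference.
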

\begin{proof}
Note that
 \begin{align}
 \|f(\bA \bx_1) - f(\bA \bx_2)\|_2^2 & = \sum_{i =1}^n \left(f(\ba_i^T\bx_1)-f(\ba_i^T\bx_2)\right)^2 \\
 & = \sum_{i =1}^n f'(\alpha_i)^2 \langle \ba_i, \bx_1 - \bx_2\rangle^2 \label{eq:firt_MVT}\\
 & \le u^2 \|\bA(\bx_1 -\bx_2)\|_2^2,
\end{align}
where we use the MVT in~\eqref{eq:firt_MVT} and $\alpha_i$ is in the interval between $\ba_i^T\bx_1$ and $\ba_i^T\bx_2$. Similarly, we have $\|f(\bA \bx_1) - f(\bA \bx_2)\|_2 \ge l\|\bA(\bx_1-\bx_2)\|_2$.
\end{proof}

Next, we present the following lemma for a standard concentration inequality for the sum of sub-Gaussian random variables. 
\begin{lemma} {\em (Hoeffding-type inequality \cite[Proposition~5.10]{vershynin2010introduction})}
\label{lem:large_dev_Gaussian} Let $X_{1}, \ldots , X_{N}$ be independent zero-mean sub-Gaussian random variables, and let $K = \max_i \|X_i\|_{\psi_2}$. Then, for any $\balpha=[\alpha_1,\alpha_2,\ldots,\alpha_N]^T \in \mathbb{R}^N$ and any $\epsilon\ge 0$, it holds that
\begin{equation}
\mathbb{P}\left( \Big|\sum_{i=1}^{N} \alpha_i X_{i}\Big| \ge \epsilon\right) \le   \exp\left(1-\frac{c \epsilon^2}{K^2\|\balpha\|_2^2}\right),
\end{equation}
where $c>0$ is a constant.
\end{lemma}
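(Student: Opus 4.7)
The plan is to carry out the standard Chernoff-moment-generating-function argument for sums of independent zero-mean sub-Gaussian random variables, and then absorb the resulting absolute constants into the $\exp(1-\cdots)$ form stated in the lemma. The starting point will be the well-known equivalence of sub-Gaussian characterizations: if $X$ is zero-mean sub-Gaussian with $\|X\|_{\psi_2}\le K$, then there is an absolute constant $C_1>0$ such that $\mathbb{E}[\exp(\lambda X)]\le \exp(C_1 \lambda^2 K^2)$ for every $\lambda\in\mathbb{R}$. This is a textbook consequence of the moment bound in Definition~\ref{def:subg} combined with the series expansion of $\exp$, and I would simply cite it from the same reference (Vershynin) that the lemma is attributed to.

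Next, I would tensorize using independence. For any $\lambda\in\mathbb{R}$,
\begin{align}
\mathbb{E}\!\left[\exp\!\Big(\lambda\sum_{i=1}^{N}\alpha_i X_i\Big)\right]
&=\prod_{i=1}^{N}\mathbb{E}\!\left[\exp(\lambda\alpha_i X_i)\right]
\le \prod_{i=1}^{N}\exp\!\left(C_1\lambda^2\alpha_i^2\|X_i\|_{\psi_2}^2\right)\\
&\le \exp\!\left(C_1\lambda^2 K^2\|\balpha\|_2^2\right),
\end{align}
using $\|X_i\|_{\psi_2}\le K$ in the last step. This is the main ``input'' to Chernoff's inequality and is where the dependence on $K^2\|\balpha\|_2^2$ appears.

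Then I would apply Markov's inequality to the exponentiated random variable and optimize. Specifically, for $\lambda>0$,
\begin{equation}
\mathbb{P}\!\left(\sum_{i=1}^{N}\alpha_i X_i\ge \epsilon\right)\le \exp(-\lambda\epsilon)\cdot \exp\!\left(C_1\lambda^2 K^2\|\balpha\|_2^2\right),
\end{equation}
and choosing $\lambda=\epsilon/(2C_1 K^2\|\balpha\|_2^2)$ gives a bound of $\exp\!\left(-\epsilon^2/(4C_1 K^2\|\balpha\|_2^2)\right)$. Applying the same argument to $-\sum_i\alpha_i X_i$ and taking a union bound yields a two-sided tail of the form $2\exp\!\left(-c\epsilon^2/(K^2\|\balpha\|_2^2)\right)$ with $c=1/(4C_1)$. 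Since $2\le e$, this is in turn bounded by $\exp\!\left(1-c\epsilon^2/(K^2\|\balpha\|_2^2)\right)$, matching the form in the statement.

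There is no real obstacle here; the only subtlety is bookkeeping the absolute constant so that the ``$+1$'' in the exponent is valid, which is handled by the $2\le e$ step above. If one wanted to avoid appealing to the MGF-to-$\psi_2$ equivalence as a black box, a self-contained route is to expand $\mathbb{E}[\exp(\lambda X)]=\sum_{k\ge 0}\lambda^k\mathbb{E}[X^k]/k!$, plug in the moment bound $\mathbb{E}[|X|^k]\le (CK)^k k^{k/2}$ from Definition~\ref{def:subg}, use $k^{k/2}/k!\le (e/k)^{k/2}\cdot$ (Stirling), and recognize the resulting series as bounded by $\exp(C_1\lambda^2 K^2)$; this is the only calculation I would actually write out in detail, and it is a standard exercise.
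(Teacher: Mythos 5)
Your argument is correct and is precisely the standard Chernoff/moment-generating-function proof of this inequality; the paper itself gives no proof, simply importing the statement from Vershynin's Proposition~5.10, whose proof is the one you have reproduced (MGF bound for a single sub-Gaussian variable, tensorization by independence, optimization of the Chernoff parameter, and absorbing the factor of $2$ from the two-sided union bound via $2\le e$ to obtain the $\exp(1-\cdot)$ form). Nothing further is needed.
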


With the above lemmas in place, we are now ready to prove Theorem~\ref{thm:opt_known}.

\subsection{Proof of Theorem~\ref{thm:opt_known}}

Since $\hat{\bx}$ is a solution to~\eqref{eq:gen_lasso_known} and $\bar{\bx} \in \calK$, we have
 \begin{equation}
  \|f(\bA\hat{\bx}) - \tilde{\by}\|_2 \le \|f(\bA\bar{\bx}) - \tilde{\by}\|_2, 
 \end{equation}
which gives
\begin{align}
 &\frac{1}{n}\|f(\bA\hat{\bx}) - f(\bA\bar{\bx})\|_2^2 \le \frac{2}{n}\langle f(\bA\hat{\bx}) - f(\bA\bar{\bx}), \tilde{\by}-f(\bA\bar{\bx})\rangle \\
 & = \frac{2}{n}\langle f(\bA\hat{\bx}) - f(\bA\bar{\bx}), \tilde{\by}-\by\rangle + \frac{2}{n}\langle f(\bA\hat{\bx}) - f(\bA\bar{\bx}), \bm{\eta}\rangle + \frac{2}{n}\langle f(\bA\hat{\bx}) - f(\bA\bar{\bx}), f(\bA\bx^*)-f(\bA\bar{\bx})\rangle, \label{eq:three_terms_opt_known}
\end{align}
where we use $\bm{\eta} = \by-f(\bA\bx^*)$ in~\eqref{eq:three_terms_opt_known}. From the assumptions about JLE and bounded spectral norm ({\em cf.}~\eqref{eq:JLE_assump} and~\eqref{eq:BSN_assump}), as well as using a chaining argument similar to those in~\cite{bora2017compressed,liu2020sample}, we obtain that $\frac{1}{\sqrt{n}}\bA$ satisfies the TS-REC$(\calK,\alpha,\delta)$ for $\calK= G(B_2^k(r))$, $\alpha \in (0,1)$ and $\delta >0$. In particular, setting $\alpha  = 0.5$, we have 
\begin{equation}
 \frac{1}{\sqrt{n}}\left\|\bA (\hat{\bx}-\bar{\bx})\right\|_2 \ge \frac{1}{2}\|\hat{\bx}-\bar{\bx}\|_2 -\delta. 
\end{equation}
Taking the square on both sides, we obtain 
\begin{equation}\label{eq:non_unif2}
 \frac{1}{4}\|\hat{\bx}-\bar{\bx}\|_2^2 \le \frac{1}{n}\left\| \bA (\hat{\bx}-\bar{\bx})\right\|_2^2 + \delta \|\hat{\bx}-\bar{\bx}\|_2 \le \frac{1}{ n l^2}\|f(\bA\hat{\bx})-f(\bA\bar{\bx})\|_2^2 + \delta \|\hat{\bx}-\bar{\bx}\|_2. 
\end{equation}
In the following, we control the three terms in the right-hand-side of~\eqref{eq:three_terms_opt_known} separately. 
\begin{itemize}
\item Upper bounding $\big|\frac{2}{n}\langle f(\bA\hat{\bx}) - f(\bA\bar{\bx}), \tilde{\by}-\by\rangle\big|$: We have
\begin{align}
 \left|\frac{2}{n}\langle f(\bA\hat{\bx}) - f(\bA\bar{\bx}), \tilde{\by}-\by\rangle\right| &\le \left\|\frac{2}{\sqrt{n}}(f(\bA\hat{\bx}) - f(\bA\bar{\bx}))\right\|_2 \cdot \left\|\frac{1}{\sqrt{n}}(\by -\tilde{\by})\right\|_2 \\
 & \le \frac{2\tau u}{\sqrt{n}}\|\bA(\hat{\bx}-\bar{\bx})\|_2 \label{eq:advNoise_known_eq1}\\
 & \le 2\tau u \cdot O(\|\hat{\bx}-\bar{\bx}\|_2 + \delta),\label{eq:advNoise_known_eq2}
\end{align}
where we use Lemma~\ref{lem:simple_lu} in~\eqref{eq:advNoise_known_eq1} and use the TS-REC in~\eqref{eq:advNoise_known_eq2}. 

 \item Upper bounding $\big|\frac{2}{n}\langle f(\bA\hat{\bx}) - f(\bA\bar{\bx}), \bm{\eta}\rangle\big|$: We have
    \begin{align}
  \frac{1}{n}|\langle f(\bA\hat{\bx}) - f(\bA\bar{\bx}), \bm{\eta}\rangle| & = \left|\frac{1}{n}\sum_{i=1}^n \eta_i \left(f\left(\ba_i^T \hat{\bx}\right) - f\left(\ba_i^T \bar{\bx}\right)\right)\right| \\
  & = \left|\frac{1}{n}\sum_{i=1}^n \eta_i f'(\beta_i) \langle\ba_i, \hat{\bx}-\bar{\bx} \rangle\right|, \label{eq:sum_eta_f}
 \end{align}
where $\beta_i$ is in the interval between $\ba_i^T\hat{\bx}$ and $\ba_i^T\bar{\bx}$. From setting $\alpha = \frac{1}{2}$ in the TS-REC, we obtain
\begin{equation}
 \sqrt{\sum_{i=1}^n (f'(\beta_i)\langle\ba_i, \hat{\bx}-\bar{\bx} \rangle)^2} \le u\|\bA(\hat{\bx}-\bar{\bx})\|_2 \le u\sqrt{n}\left(\frac{3}{2}\|\hat{\bx}-\bar{\bx}\|_2 +\delta\right).\label{eq:TS-REC_eta_coef}
\end{equation}
Then, from Lemma~\ref{lem:large_dev_Gaussian} and the assumption that $\eta_1,\eta_2,\ldots,\eta_m$ are i.i.d.~zero-mean sub-Gaussian with $\sigma= \max_{i \in [n]}\|\eta_i\|_{\psi_2}$, for any $t>0$, we have with probability $1-e^{-\Omega(t)}$ that
\begin{equation}
 \left|\frac{1}{n}\sum_{i=1}^n \eta_i f'(\beta_i) \langle\ba_i, \hat{\bx}-\bar{\bx} \rangle\right| \le  \frac{C\sigma\sqrt{\sum_{i=1}^n (f'(\beta_i)\langle\ba_i, \hat{\bx}-\bar{\bx} \rangle)^2}}{n} \cdot \sqrt{t}.\label{eq:sub_Gaussian_concen_eta}
\end{equation}
Combining~\eqref{eq:sum_eta_f},~\eqref{eq:TS-REC_eta_coef},~\eqref{eq:sub_Gaussian_concen_eta}, and setting $t = k \log\frac{Lr}{\delta}$, we obtain with probability at least $1-e^{-\Omega(k \log \frac{Lr}{\delta})}$ that
\begin{equation}\label{eq:non_unif7}
 \frac{1}{n}|\langle \bmeta, f(\bA\hat{\bx}) - f(\bA\bar{\bx}) \rangle| \le O\left(u\sigma   \sqrt{\frac{k\log\frac{Lr}{\delta}}{n}}\right) (\|\hat{\bx}-\bar{\bx}\|_2 +\delta).
\end{equation}

\item Upper bounding $\big|\frac{2}{n}\langle f(\bA\hat{\bx}) - f(\bA\bar{\bx}), f(\bA\bx^*)-f(\bA\bar{\bx})\rangle\big|$: We have 
\begin{align}
 &\frac{1}{n}\left|\langle f(\bA\hat{\bx}) - f(\bA\bar{\bx}), f(\bA\bx^*)-f(\bA\bar{\bx})\rangle \right| \le \frac{1}{n} \|f(\bA\hat{\bx}) - f(\bA\bar{\bx})\|_2 \|f(\bA\bx^*)-f(\bA\bar{\bx})\|_2  \\
 & \le \frac{u^2}{n}\|\bA(\hat{\bx}-\bar{\bx})\|_2 \cdot \|\bA(\bx^*-\bar{\bx})\|_2 \label{eq:aaaa_exp1} \\
 & \le u^2 \left(\frac{3}{2}\|\hat{\bx}-\bar{\bx}\|_2 +\delta\right) \cdot \frac{3}{2}\|\bx^*-\bar{\bx}\|_2 \label{eq:aaaa_exp2}\\
 & = O\left(u^2 \|\bx^*-\bar{\bx}\|_2 \right) (\|\hat{\bx}-\bar{\bx}\|_2 +\delta)\label{eq:non_unif8},
\end{align}
where we use Lemma~\ref{lem:simple_lu} in~\eqref{eq:aaaa_exp1}, and in~\eqref{eq:aaaa_exp2}, we use the TS-REC with $\alpha =\frac{1}{2}$ for $\|\bA(\hat{\bx}-\bar{\bx})\|_2$ and the JLE ({\em cf.}~\eqref{eq:JLE_assump}) with $\epsilon = \frac{1}{2}$ for $\|\bA(\bx^*-\bar{\bx})\|_2$ (note that both $\bx^*$ and $\bar{\bx}$ are fixed vectors). 
\end{itemize}

Combining~\eqref{eq:three_terms_opt_known},~\eqref{eq:non_unif2},~\eqref{eq:advNoise_known_eq2},~\eqref{eq:non_unif7}, and~\eqref{eq:non_unif8}, and recalling that $l,u$ are fixed constants, we obtain 
\begin{align}
 \|\hat{\bx}-\bar{\bx}\|_2^2 \le O\left(\frac{\sigma \sqrt{k\log\frac{Lr}{\delta}}}{ \sqrt{n}}+\|\bx^*-\bar{\bx}\|_2 + \tau + \delta\right)(\|\bar{\bx}-\hat{\bx}\|_2 +\delta),\label{eq:non_unif_final}
\end{align}
which gives the desired result.\footnote{~\eqref{eq:non_unif_final} is of the form $x^2 \le \alpha_3(x+\alpha_4)$, where $x = \|\hat{\bx}-\bar{\bx}\|_2$, $\alpha_4 = \delta$ and $\alpha_4 = O(\alpha_3)$ (since the $\delta$ term also appears in $\alpha_3$). From the quadratic formula, we obtain $x \le \frac{\alpha_3 + \sqrt{\alpha_3^2 + 4 \alpha_3\alpha_4}}{2} = O(\alpha_3)$.}

\section{Proof of Theorem~\ref{thm:pgd} (PGD for Known Nonlinearity)}

Before proving the theorem, we prove some auxiliary lemmas.

\subsection{Auxiliary Results for Theorem~\ref{thm:pgd}}

First, we state the following useful lemma.

\begin{lemma}\label{lem:x1x2_JLE}
 Suppose that $\bA$ satisfies the assumption about JLE ({\em cf.}~\eqref{eq:JLE_assump}). Then, for any finite sets $E_1,E_2 \subseteq \bbR^p$ and $\epsilon \in (0,1)$, for all $\bx_1\in E_1$, $\bx_2 \in E_2$, we have 
 \begin{equation}
  \left|\frac{1}{n}\bx_1^T\bA^T\bA\bx_2 - \bx_1^T\bx_2\right| \le \epsilon\left(\|\bx_1\|_2^2+\|\bx_2\|_2^2\right).
 \end{equation}
\end{lemma}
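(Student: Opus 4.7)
The plan is to reduce the cross term $\bx_1^T\bA^T\bA\bx_2$ to squared norms via a polarization identity and then invoke the JLE assumption \eqref{eq:JLE_assump} for each squared-norm term. Specifically, I would use the identity
\[
\bx_1^T\bx_2 \;=\; \tfrac{1}{4}\bigl(\|\bx_1+\bx_2\|_2^2 - \|\bx_1-\bx_2\|_2^2\bigr),
\]
which together with the analogous identity for $\tfrac{1}{n}\bx_1^T\bA^T\bA\bx_2$ (using the inner product induced by $\tfrac{1}{n}\bA^T\bA$) yields
\[
\tfrac{1}{n}\bx_1^T\bA^T\bA\bx_2 - \bx_1^T\bx_2 \;=\; \tfrac{1}{4}\Bigl[\bigl(\|\tfrac{1}{\sqrt{n}}\bA(\bx_1+\bx_2)\|_2^2 - \|\bx_1+\bx_2\|_2^2\bigr) - \bigl(\|\tfrac{1}{\sqrt{n}}\bA(\bx_1-\bx_2)\|_2^2 - \|\bx_1-\bx_2\|_2^2\bigr)\Bigr].
\]

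Next, I would apply the JLE assumption to the finite auxiliary set $E := (E_1 + E_2) \cup (E_1 - E_2)$, which has cardinality at most $2|E_1||E_2|$, so $\log |E| = O(\log|E_1| + \log|E_2|) = O(\log(|E_1||E_2|))$ and the JLE sample-complexity condition is still of the stated form. This gives $\bigl|\|\tfrac{1}{\sqrt{n}}\bA\bz\|_2^2 - \|\bz\|_2^2\bigr| \le \epsilon \|\bz\|_2^2$ for $\bz \in \{\bx_1+\bx_2,\bx_1-\bx_2\}$ uniformly over $\bx_1\in E_1,\bx_2\in E_2$.

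Plugging these into the polarization expansion and taking absolute values yields
\[
\left|\tfrac{1}{n}\bx_1^T\bA^T\bA\bx_2 - \bx_1^T\bx_2\right| \;\le\; \tfrac{\epsilon}{4}\bigl(\|\bx_1+\bx_2\|_2^2 + \|\bx_1-\bx_2\|_2^2\bigr) \;=\; \tfrac{\epsilon}{2}\bigl(\|\bx_1\|_2^2 + \|\bx_2\|_2^2\bigr),
\]
where the last equality uses the parallelogram identity. This is stronger than the claimed bound by a factor of $2$, so the statement follows immediately (and one could rescale $\epsilon$ up front if a sharper constant is desired).

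There is essentially no obstacle here: the argument is a one-line polarization plus a union bound over a finite set of size $O(|E_1||E_2|)$. The only thing to be careful about is ensuring the JLE is invoked on $E_1 + E_2$ and $E_1 - E_2$ (rather than on the sets $E_1,E_2$ individually), but since these sets are still finite with log-cardinality of the same order, the sample complexity requirement $n = \Omega(\epsilon^{-c_1}\log^{c_2}(|E_1||E_2|))$ matches the form assumed in \eqref{eq:JLE_assump}.
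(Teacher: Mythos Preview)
Your proposal is correct and follows essentially the same route as the paper: both proofs reduce the cross term to squared norms via the polarization identity, apply the JLE to $\bx_1\pm\bx_2$, and conclude using the parallelogram law. The only cosmetic difference is that the paper invokes JLE with parameter $\epsilon/2$ to land exactly on the stated bound, whereas you invoke it with $\epsilon$ and obtain the (slightly sharper) bound $\tfrac{\epsilon}{2}(\|\bx_1\|_2^2+\|\bx_2\|_2^2)$.
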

\begin{proof}
 From the JLE, we have
 \begin{equation}
  \left(1-\frac{\epsilon}{2}\right)\|\bx_1 + \bx_2\|_2^2 \le \left\|\frac{1}{\sqrt{n}}\bA(\bx_1+\bx_2)\right\|_2^2 \le \left(1+\frac{\epsilon}{2}\right)\|\bx_1 + \bx_2\|_2^2,
 \end{equation}
and
\begin{equation}
  \left(1-\frac{\epsilon}{2}\right)\|\bx_1 - \bx_2\|_2^2 \le \left\|\frac{1}{\sqrt{n}}\bA(\bx_1-\bx_2)\right\|_2^2 \le \left(1+\frac{\epsilon}{2}\right)\|\bx_1 - \bx_2\|_2^2.
 \end{equation}
 Then, from
 \begin{equation}
  \frac{1}{n}\bx_1^T\bA^T\bA\bx_2 = \frac{\|\bA(\bx_1+\bx_2)\|_2^2 - \|\bA(\bx_1-\bx_2)\|_2^2}{4n},
 \end{equation}
 and 
 \begin{equation}
  \bx_1^T\bx_2 = \frac{\|\bx_1+\bx_2\|_2^2 - \|\bx_1-\bx_2\|_2^2}{4},
 \end{equation}
we obtain the desired bound. 
\end{proof}

Based on Lemma~\ref{lem:x1x2_JLE}, we present the following lemma.
\begin{lemma}\label{lem:imp_pgd}
 Let $\bar{\bx} = \arg\min_{\bx \in\calK}\|\bx^*-\bx\|_2$. For any $t\in\bbN$, $\delta >0$ and $\epsilon \in (0,1)$, we have
 \begin{align}
  &\left|\left\langle \bx^{(t)}-\bar{\bx} - \frac{\zeta}{n}\bA^T\left(\left(f(\bA\bx^{(t)})-f(\bA\bar{\bx})\odot f'(\bA\bx^{(t)})\right),\bx^{(t+1)}-\bar{\bx}\right)\right\rangle\right| \le \max \{1-\zeta l^2, \zeta u^2 -1\} \|\bx^{(t)}-\bar{\bx}\|_2 \|\bx^{(t+1)}-\bar{\bx}\|_2 \nonumber\\
  &  \indent + 
\zeta u^2 \epsilon \left(\|\bx^{(t)}-\bar{\bx}\|_2^2 + \|\bx^{(t+1)}-\bar{\bx}\|_2^2\right) + C \zeta u^2\delta(\|\bx^{(t)}-\bar{\bx}\|_2 + \|\bx^{(t+1)}-\bar{\bx}\|_2 +\delta).
 \end{align}
\end{lemma}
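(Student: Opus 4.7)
The plan is to reduce the Hadamard-product expression to a quadratic form $(\bA\bv)^T\bD(\bA\bu)$ with a controlled diagonal matrix, then split the error into a term whose matrix has operator norm at most $\mu_2$ and a pure JLE-type term. Applying the mean value theorem coordinate-wise gives $f(\ba_i^T\bx^{(t)}) - f(\ba_i^T\bar{\bx}) = f'(\alpha_i)\,\ba_i^T(\bx^{(t)}-\bar{\bx})$ for some $\alpha_i$ between $\ba_i^T\bx^{(t)}$ and $\ba_i^T\bar{\bx}$; hence, defining the diagonal matrix $\bD\in\bbR^{n\times n}$ by $D_{ii} := f'(\alpha_i)\,f'(\ba_i^T\bx^{(t)})$, the assumption $f'\in[l,u]$ forces $l^2 \le D_{ii} \le u^2$, and the quantity to bound becomes $\langle \bu,\bv\rangle - \tfrac{\zeta}{n}(\bA\bv)^T\bD(\bA\bu)$ with $\bu := \bx^{(t)}-\bar{\bx}$ and $\bv := \bx^{(t+1)}-\bar{\bx}$. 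Adding and subtracting $\tfrac{1}{n}(\bA\bv)^T(\bA\bu)$ yields the decomposition $T_1 + T_2$, where
\[
T_1 := \langle \bu,\bv\rangle - \tfrac{1}{n}(\bA\bv)^T(\bA\bu), \qquad T_2 := \tfrac{1}{n}(\bA\bv)^T(\bI_n-\zeta\bD)(\bA\bu).
\]

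For $T_2$, since $\bD$ is diagonal with entries in $[l^2,u^2]$, a direct case check gives $\|\bI_n-\zeta\bD\|_{2\to 2} = \max_i|1-\zeta D_{ii}| = \max\{|1-\zeta l^2|,|1-\zeta u^2|\} = \mu_2$, so Cauchy--Schwarz yields $|T_2| \le (\mu_2/n)\|\bA\bu\|_2\|\bA\bv\|_2$. Because $\bx^{(t)},\bx^{(t+1)},\bar{\bx}\in\calK$, the TS-REC for $\calK$ (which $\bA/\sqrt{n}$ satisfies under the stated JLE and bounded-spectral-norm hypotheses via a standard chaining argument, as the paper notes) used with parameter $\epsilon$ gives $\|\bA\bu\|_2/\sqrt{n}\le(1+\epsilon)\|\bu\|_2+\delta$ and likewise for $\bv$. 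Expanding $\mu_2((1+\epsilon)\|\bu\|_2+\delta)((1+\epsilon)\|\bv\|_2+\delta)$, handling the $\epsilon$-cross-terms via AM--GM, and invoking the crude bound $\mu_2\lesssim\zeta u^2$ (which holds in the regime $\mu_2<1/2$ relevant to the theorem, where $\zeta u^2 \ge 1/2$) absorbs everything beyond the leading $\mu_2\|\bu\|_2\|\bv\|_2$ into the allowed $\zeta u^2\epsilon(\|\bu\|_2^2+\|\bv\|_2^2)$ and $C\zeta u^2\delta(\|\bu\|_2+\|\bv\|_2+\delta)$ contributions.

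For $T_1$, Lemma~\ref{lem:x1x2_JLE} already delivers a bound of precisely the desired form, but only for fixed pairs in finite sets. To extend it to all $(\bx^{(t)},\bx^{(t+1)})\in\calK\times\calK$, I would mirror the chaining executed in the proof of Theorem~\ref{thm:pgd_unknown}: build nested nets $M_0\subseteq\cdots\subseteq M_q$ of $B_2^k(r)$ at scales $\delta/(2^i L)$, expand $\bx^{(t)} = (\bx^{(t)}-\bu_q)+\sum_{i=1}^q(\bu_i-\bu_{i-1})+\bu_0$ and symmetrically $\bx^{(t+1)}$ with vectors $\bv_j$ on the same chain, kill the finest-scale residual via $\|\bA\|_{2\to 2}=O(p^a)$ with $q=\Theta(\log p)$, and take a union bound of Lemma~\ref{lem:x1x2_JLE} over all relevant pairs $G(M_i)\times G(M_j)$, whose log-cardinality $O(k\log(Lr/\delta))$ fits inside the paper's sample-complexity hypothesis. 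The main obstacle I anticipate is precisely this two-parameter chaining: because $T_1$ is bilinear in $\bu$ and $\bv$ simultaneously, the mixed increment terms $\langle\bu_i-\bu_{i-1},\bv_j-\bv_{j-1}\rangle$ must be summed along a symmetric diagonal of the two-parameter grid so that only quantities of order $\epsilon(\|\bu\|_2^2+\|\bv\|_2^2)$ plus $O(\delta)(\|\bu\|_2+\|\bv\|_2)+O(\delta^2)$ leak out, rather than accumulating geometrically or inflating the leading $\mu_2$ coefficient. Structurally the calculation mirrors the one in Theorem~\ref{thm:pgd_unknown}, so I expect it to go through with careful bookkeeping.
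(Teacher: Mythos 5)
Your proposal is correct and arrives at the stated bound, but the key step is organized differently from the paper's proof, and the comparison is worth recording. Both arguments begin identically: the mean value theorem turns $f(\bA\bx^{(t)})-f(\bA\bar{\bx})$ into a diagonal reweighting of $\bA(\bx^{(t)}-\bar{\bx})$ with weights $f'(\beta_{t,i})f'(\ba_i^T\bx^{(t)})\in[l^2,u^2]$, and both ultimately need the chained JLE estimate $\bigl|\frac{1}{n}(\bA\bv)^T(\bA\bu)-\langle\bu,\bv\rangle\bigr|\le\epsilon(\|\bu\|_2^2+\|\bv\|_2^2)+C\delta(\|\bu\|_2+\|\bv\|_2+\delta)$, extended from the finite sets of Lemma~\ref{lem:x1x2_JLE} to $\calK\times\calK$ by the net construction of the Theorem~\ref{thm:pgd_unknown} proof (your $T_1$ is exactly this quantity, and you defer the bilinear chaining to that argument just as the paper does). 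Where you diverge is in handling the weights: the paper sandwiches the full sum $\frac{\zeta}{n}\sum_i f'(\beta_{t,i})f'(\ba_i^T\bx^{(t)})(\ba_i^T\bu)(\ba_i^T\bv)$ between $\zeta l^2$- and $\zeta u^2$-multiples of the JLE estimate and takes the larger of the two one-sided bounds, whereas you isolate $T_2=\frac{1}{n}(\bA\bv)^T(\bI_n-\zeta\bD)(\bA\bu)$ and control it via $\|\bI_n-\zeta\bD\|_{2\to 2}\le\max\{|1-\zeta l^2|,|1-\zeta u^2|\}=\max\{1-\zeta l^2,\zeta u^2-1\}$, Cauchy--Schwarz, and the TS-REC. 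Your route is the more careful one on a genuine point: pulling a factor in $[l^2,u^2]$ out of each summand $(\ba_i^T\bu)(\ba_i^T\bv)$ only preserves the inequality when that summand is nonnegative, so the paper's sandwich implicitly requires a sign-splitting step that is not written out, while your operator-norm bound is sign-agnostic and avoids the issue altogether. The price you pay is the $(1+\epsilon)^2$ expansion and the $\delta$-cross-terms from applying TS-REC to $\|\bA\bu\|_2\|\bA\bv\|_2$, which inflate the $\zeta u^2\epsilon(\cdot)$ and $C\zeta u^2\delta(\cdot)$ terms by absolute constants (your appeal to $\tilde{\mu}_2\lesssim\zeta u^2$ in the regime $2\tilde{\mu}_2<1$ is the right justification); since $\epsilon$ is a free parameter of the JLE assumption, this costs only a constant rescaling of $\epsilon$ in the sample complexity, a level of slack the paper itself tolerates throughout.
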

\begin{proof}
We have
 \begin{align}
&\bx^{(t)}-\bar{\bx} - \frac{\zeta}{n} \bA^T \left(\left(f(\bA\bx^{(t)}) - f(\bA\bar{\bx})\right)\odot f'(\bA\bx^{(t)})\right) \nonumber\\
& =\left(\bx^{(t)}-\bar{\bx}\right) -\frac{\zeta}{n} \sum_{i=1}^n f'(\beta_{t,i})f'(\ba_i^T\bx^{(t)}) \ba_i\ba_i^T (\bx^{(t)} -\bar{\bx}),\label{eq:imp_pgd_comb1}
\end{align}
where $\beta_{t,i}$ is in the interval between $\ba_i^T\bx^{(t)}$ and $\ba_i^T\bar{\bx}$. From Lemma~\ref{lem:x1x2_JLE}, the assumption about bounded spectral norm ({\em cf.}~\eqref{eq:BSN_assump}), a similar chaining argument to those in~\cite{bora2017compressed,liu2020sample}, and Appendix~\ref{app:proofThm1}, we obtain
\begin{align}
 &\left|\frac{1}{n} \left\langle\sum_{i=1}^n \ba_i\ba_i^T \left(\bx^{(t)} -\bar{\bx}\right), \bx^{(t+1)}-\bar{\bx} \right\rangle - \left(\bx^{(t+1)}-\bar{\bx}\right)^T\left(\bx^{(t)} -\bar{\bx}\right)\right|\nonumber\\ 
 &= \left|\frac{1}{n}\left(\bx^{(t+1)}-\bar{\bx}\right)^T\bA^T\bA\left(\bx^{(t)} -\bar{\bx}\right) - \left(\bx^{(t+1)}-\bar{\bx}\right)^T\left(\bx^{(t)} -\bar{\bx}\right)  \right|\\
 & \le \epsilon \left(\|\bx^{(t)}-\bar{\bx}\|_2^2 + \|\bx^{(t+1)}-\bar{\bx}\|_2^2\right) + C\delta(\|\bx^{(t)}-\bar{\bx}\|_2 + \|\bx^{(t+1)}-\bar{\bx}\|_2 +\delta).
\end{align}
Hence, we have
\begin{align}
 &\frac{\zeta}{n} \left\langle\sum_{i=1}^n f'(\beta_{t,i})f'(\ba_i^T\bx^{(t)}) \ba_i\ba_i^T (\bx^{(t)} -\bar{\bx}),\bx^{(t+1)}-\bar{\bx} \right\rangle  \nonumber\\
 &\le \zeta u^2 \left(\bx^{(t+1)}-\bar{\bx}\right)^T\left(\bx^{(t)} -\bar{\bx}\right) + \zeta u^2\epsilon \left(\|\bx^{(t)}-\bar{\bx}\|_2^2 + \|\bx^{(t+1)}-\bar{\bx}\|_2^2\right) + \zeta u^2 C\delta(\|\bx^{(t)}-\bar{\bx}\|_2 + \|\bx^{(t+1)}-\bar{\bx}\|_2 +\delta),\label{eq:imp_pgd_comb2}
\end{align}
and 
\begin{align}
 &\frac{\zeta}{n} \left\langle\sum_{i=1}^n f'(\beta_{t,i})f'(\ba_i^T\bx^{(t)}) \ba_i\ba_i^T (\bx^{(t)} -\bar{\bx}),\bx^{(t+1)}-\bar{\bx} \right\rangle  \nonumber\\
 &\ge \zeta l^2 \left(\bx^{(t+1)}-\bar{\bx}\right)^T\left(\bx^{(t)} -\bar{\bx}\right) - \zeta l^2\epsilon \left(\|\bx^{(t)}-\bar{\bx}\|_2^2 - \|\bx^{(t+1)}-\bar{\bx}\|_2^2\right) + \zeta l^2 C\delta(\|\bx^{(t)}-\bar{\bx}\|_2 + \|\bx^{(t+1)}-\bar{\bx}\|_2 +\delta).\label{eq:imp_pgd_comb3}
\end{align}
Combining~\eqref{eq:imp_pgd_comb1},~\eqref{eq:imp_pgd_comb2}, and~\eqref{eq:imp_pgd_comb3}, and using the inequality $\big|\bx_1^T\bx_2\big| \le \|\bx_1\|_2\|\bx_2\|_2$, we obtain the desired result.
\end{proof}

\subsection{Proof of Theorem~\ref{thm:pgd}}

For any $t \in \bbN$, let
\begin{equation}
 \tilde{\bx}^{(t+1)} = \bx^{(t)} - \frac{\zeta}{n} \bA^T \left((f(\bA\bx^{(t)})-\tilde{\by}) \odot f'(\bA\bx^{(t)})\right).
\end{equation}
From 
\begin{equation}
 \|\tilde{\bx}^{(t+1)} - \bx^{(t+1)}\|_2 \le \|\tilde{\bx}^{(t+1)} - \bar{\bx}\|_2,
\end{equation}
we have
\begin{align}
 \|\bx^{(t+1)}- \bar{\bx}\|_2^2  & \le 2\langle \tilde{\bx}^{(t+1)}- \bar{\bx},\bx^{(t+1)}- \bar{\bx}\rangle \label{eq:factor2} \\
 & = 2 \left\langle \bx^{(t)}- \bar{\bx} -\frac{\zeta}{n} \bA^T \left(\left(f(\bA\bx^{(t)})-\tilde{\by}\right)\odot f'(\bA\bx^{(t)}) \right), \bx^{(t+1)} - \bar{\bx}\right\rangle \\
 & = 2 \left\langle \bx^{(t)}-\bar{\bx} -\frac{\zeta}{n} \bA^T \left(\left(f(\bA\bx^{(t)})-f(\bA\bar{\bx})\right)\odot f'(\bA\bx^{(t)}) \right), \bx^{(t+1)} -\bar{\bx}\right\rangle \nonumber \\
 & \indent - \frac{2\zeta}{n}\cdot \left\langle \bA^T\left(\left(f(\bA\bar{\bx})-f(\bA\bx^*)\right) \odot f'(\bA\bx^{(t)})\right),\bx^{(t+1)} -\bar{\bx}\right\rangle \nonumber\\
 & \indent + \frac{2\zeta}{n}\cdot \left\langle \bA^T\left(\bmeta \odot f'(\bA\bx^{(t)})\right),\bx^{(t+1)} -\bar{\bx}\right\rangle \nonumber\\
 & \indent - \frac{2\zeta}{n}\cdot \left\langle \bA^T\left((\by-\tilde{\by}) \odot f'(\bA\bx^{(t)})\right),\bx^{(t+1)} -\bar{\bx}\right\rangle. \label{eq:pgd_1}
\end{align}
We control the four terms in~\eqref{eq:pgd_1} separately. 
\begin{itemize}
 \item The first term: Let $\tilde{\mu}_2 = \max \{1-\zeta l^2, \zeta u^2 -1\}$. From Lemma~\ref{lem:imp_pgd}, we obtain that the absolute value of the first term is upper bounded by
 \begin{align}
 &2\left|\left\langle \bx^{(t)}-\bar{\bx} - \frac{\zeta}{n}\bA^T\left(\left(f(\bA\bx^{(t)})-f(\bA\bar{\bx})\odot f'(\bA\bx^{(t)})\right),\bx^{(t+1)}-\bar{\bx}\right)\right\rangle\right| \le 2\tilde{\mu}_2 \|\bx^{(t)}-\bar{\bx}\|_2 \|\bx^{(t+1)}-\bar{\bx}\|_2 \nonumber\\
  &  \indent + 
2\zeta u^2 \epsilon \left(\|\bx^{(t)}-\bar{\bx}\|_2^2 + \|\bx^{(t+1)}-\bar{\bx}\|_2^2\right) + C \zeta u^2\delta(\|\bx^{(t)}-\bar{\bx}\|_2 + \|\bx^{(t+1)}-\bar{\bx}\|_2 +\delta).\label{eq:pgd_known_first}
 \end{align}

\item The second term: Similarly to~\eqref{eq:non_unif8}, we obtain 
\begin{equation}\label{eq:pgd_known_second}
 \left|\frac{2\zeta}{n}\cdot \left\langle \bA^T\left(\left(f(\bA\bar{\bx})-f(\bA\bx^*)\right) \odot f'(\bA\bx^{(t)})\right),\bx^{(t+1)} -\bar{\bx}\right\rangle\right| \le C\|\bar{\bx}-\bx^*\|_2 (\|\bx^{(t+1)} -\bar{\bx}\|_2 + \delta).
\end{equation}

\item The third term: Similarly to~\eqref{eq:non_unif7}, we obtain with probability $1-e^{-\Omega(k \log \frac{Lr}{\delta})}$ that
\begin{equation}\label{eq:pgd_known_third}
 \left|\frac{2\zeta}{n}\cdot \left\langle \bA^T\left(\bmeta \odot f'(\bA\bx^{(t)})\right),\bx^{(t+1)} -\bar{\bx}\right\rangle\right| \le O\left(\sigma\sqrt{\frac{k\log\frac{Lr}{\delta}}{n}}\right) (\|\bx^{(t+1)} -\bar{\bx}\|_2 + \delta).
\end{equation}

\item The fourth term: Similarly to~\eqref{eq:advNoise_known_eq2}, we obtain 
\begin{equation}\label{eq:pgd_known_fourth}
 \left|\frac{2\zeta}{n}\cdot \left\langle \bA^T\left((\by-\tilde{\by}) \odot f'(\bA\bx^{(t)})\right),\bx^{(t+1)} -\bar{\bx}\right\rangle\right| \le \tau\cdot O(\|\bx^{(t+1)} -\bar{\bx}\|_2 + \delta).
\end{equation}
\end{itemize}

Combining~\eqref{eq:pgd_1},~\eqref{eq:pgd_known_first},~\eqref{eq:pgd_known_second},~\eqref{eq:pgd_known_third} and~\eqref{eq:pgd_known_fourth}, we have that if $n = \Omega\big(\frac{k}{\epsilon^2}\log\frac{Lr}{\delta}\big)$, with probability $1-e^{-\Omega(n\epsilon^2)}$,
\begin{align}\label{eq:pgd_known_final}
 \|\bx^{(t+1)}-\bar{\bx}\|_2^2 & \le \left(2\tilde{\mu}_2 \|\bx^{(t)}-\bar{\bx}\|_2 + O\left(\sigma\sqrt{\frac{k\log\frac{Lr}{\delta}}{n}}+ \|\bx^*-\bar{\bx}\|_2+\tau+\delta\right)\right) \cdot (\|\bx^{(t+1)}-\bar{\bx}\|_2 +\delta) \nonumber \\
 & \indent +2\zeta u^2 \epsilon \left(\|\bx^{(t)}-\bar{\bx}\|_2^2 + \|\bx^{(t+1)}-\bar{\bx}\|_2^2\right).
\end{align}
Considering $\zeta,l,u$ as fixed positive constants, and selecting $\epsilon >0 $ to be sufficiently small, we obtain 
\begin{equation}
 \|\bx^{(t+1)}-\bar{\bx}\|_2 \le 2\mu_2 \|\bx^{(t)}-\bar{\bx}\|_2 + O\left(\sigma\sqrt{\frac{k\log\frac{Lr}{\delta}}{n}}+ \|\bx^*-\bar{\bx}\|_2+\tau+\delta\right).
\end{equation}

\section{Experimental Results for Noisy $1$-bit Measurement Models}

In this section, we present numerical results for the case when $f(x) = \mathrm{sign}(x+e)$ with $e \sim \calN(0,\sigma^2)$. Since this $f$ is not differentiable, we only perform Algorithm~\ref{algo:pgd_gLasso} (\texttt{PGD-G}), and do not perform Algorithm~\ref{algo:pgd_known} (\texttt{PGD-N}). We follow the SIM~\eqref{eq:sim_first} to generate the observations. For the MNIST dataset, we set $\sigma = 0.1$, and for the CelebA dataset, we set $\sigma = 0.01$. All other settings are the same as those in the main document. In Figs.~\ref{fig:mnist_imgs_sign} and~\ref{fig:celebA_imgs_sign}, we provide some examples of recovered images of the MNIST and CelebA datasets respectively. In particular, we compare with the Binary Iterative Hard Thresholding (\texttt{BIHT}) algorithm, which is a popular approach for $1$-bit CS with sparsity assumptions, and the~\texttt{1b-VAE} algorithm~\cite{liu2020sample}, which is a state-of-the-art method for $1$-bit CS with generative priors. From Figs.~\ref{fig:mnist_imgs_sign},~\ref{fig:celebA_imgs_sign} and~\ref{fig:quant_sign},  we observe that the sparsity-based method~\texttt{BIHT} leads to poor reconstruction. From these figures, we also observe that~\texttt{PGD-G} outperforms~\texttt{1b-VAE} on both the MNIST and CelebA datasets. Moreover, from Fig.~\ref{fig:quant_sign}, we see that the advantage of~\texttt{PGD-G} over~\texttt{1b-VAE} is more significant for the MNIST dataset with $\sigma = 0.1$, and is less so for the CelebA dataset with $\sigma = 0.01$. These experimental results reveal that for $1$-bit CS with generative priors,~\texttt{PGD-G} can be used to replace the current state-of-the-art method~\texttt{1b-VAE}, though~\texttt{PGD-G} does not make use of the knowledge of $f$, and is not specifically designed for noisy $1$-bit measurement models.

\begin{figure}[t]
\begin{center}
\includegraphics[width=1.0\columnwidth]{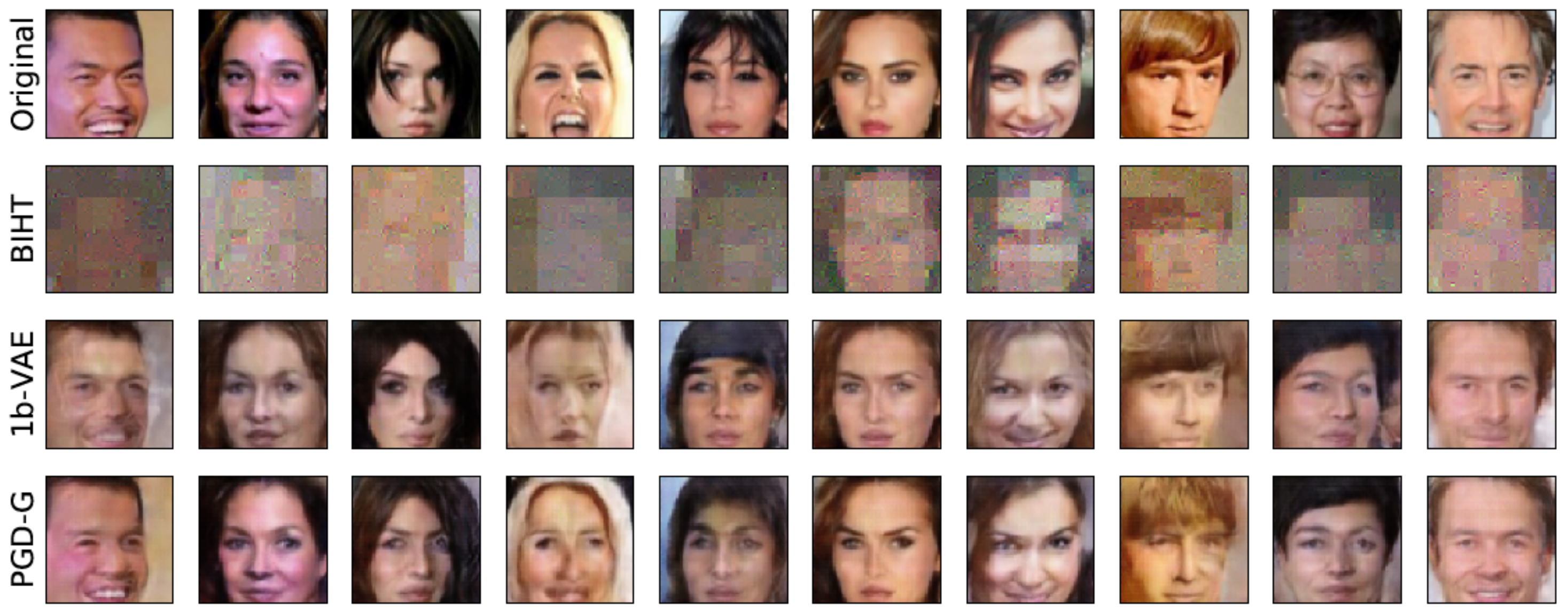}
\caption{Examples of reconstructed images of the CelebA dataset with $n = 1000$ measurements and $p = 12288$ dimensional vectors.}\label{fig:celebA_imgs_sign}   
\end{center}
\end{figure}

\begin{figure}[t]
\begin{center}
\begin{tabular}{cc}
\includegraphics[height=0.38\textwidth]{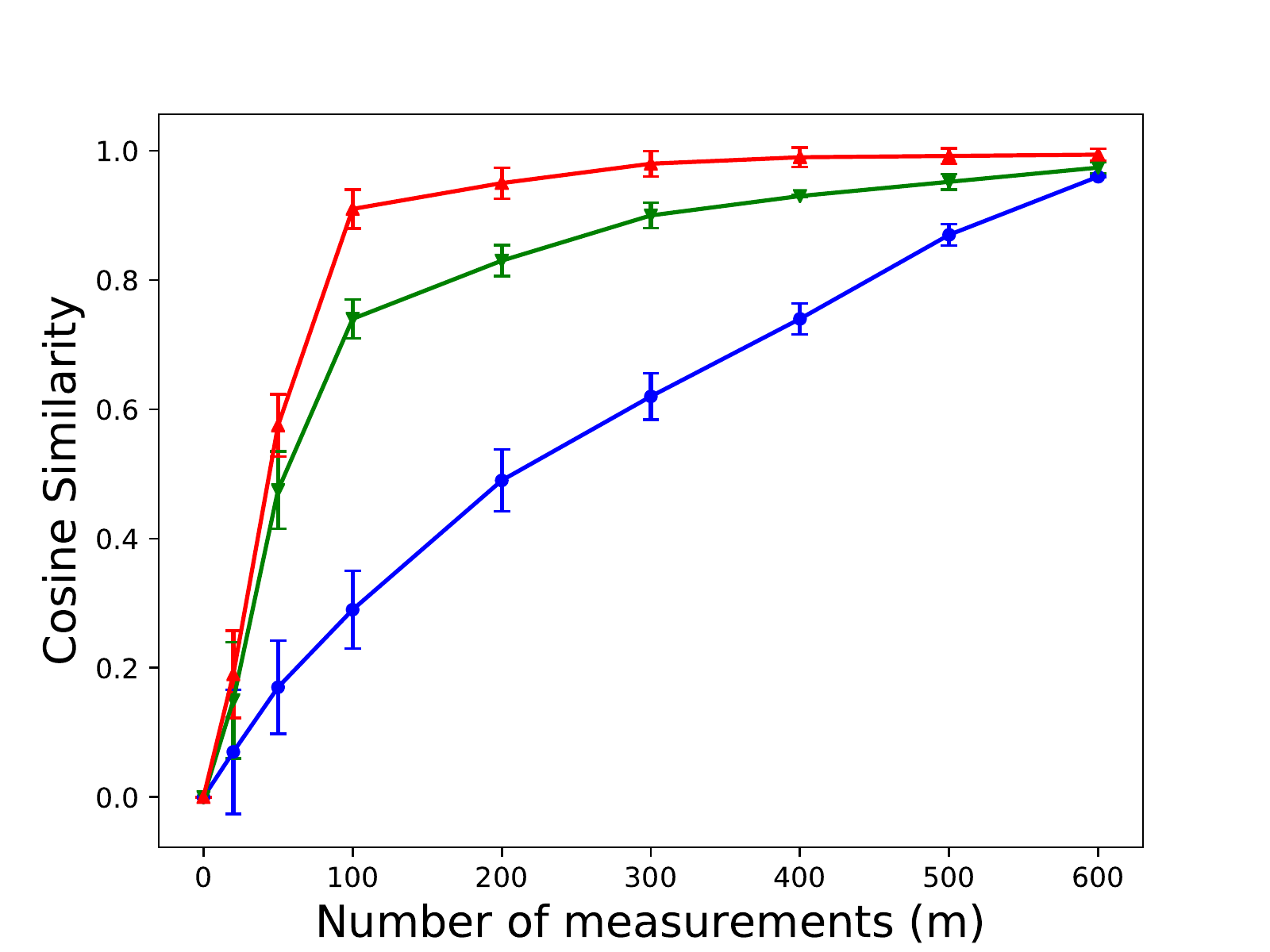} & \hspace{-0.5cm}
\includegraphics[height=0.38\textwidth]{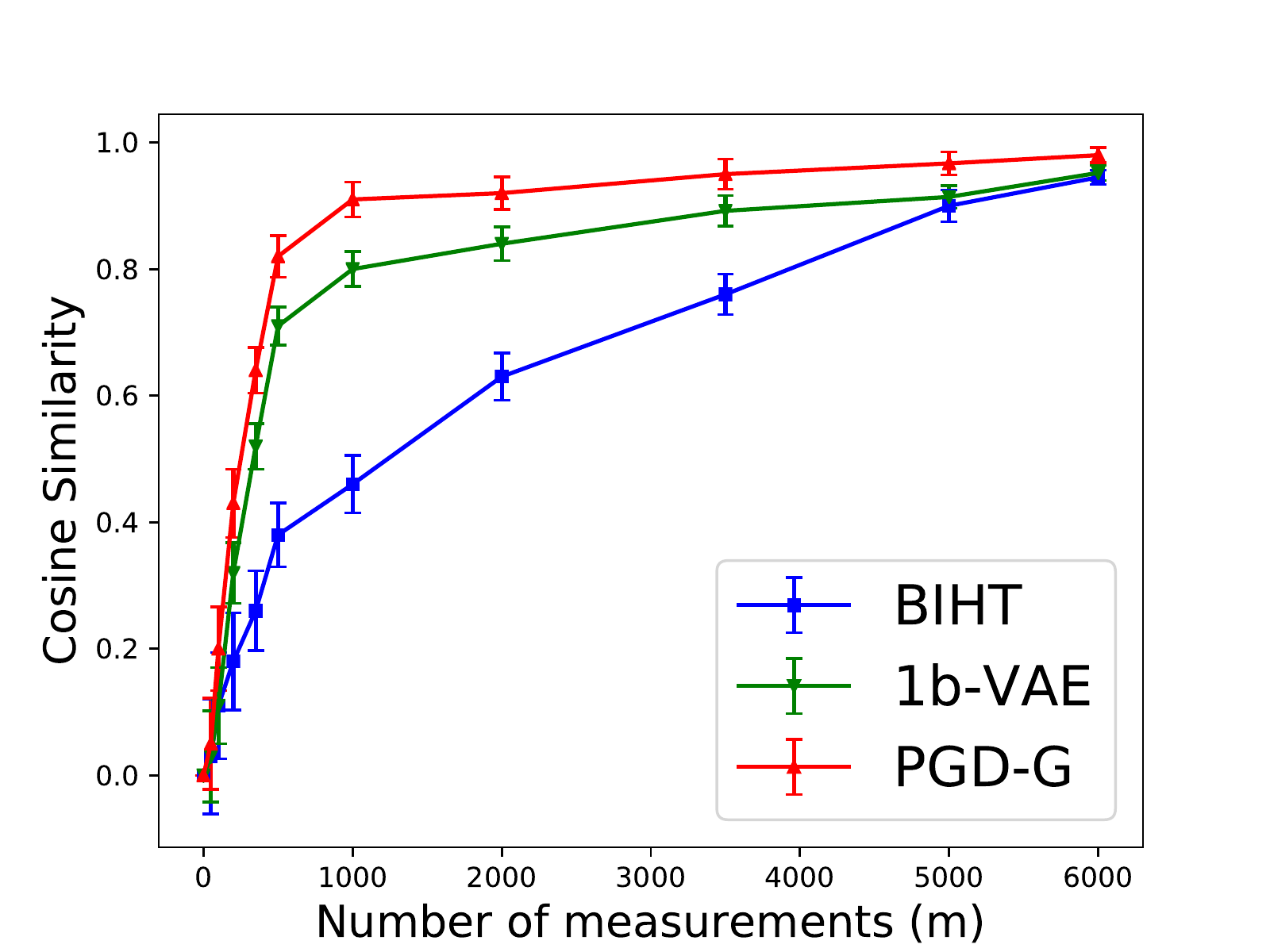} \\
{\small (a) Results on MNIST} & {\small (b) Results on CelebA}
\end{tabular}
\caption{Quantitative comparisons for the performance of~\texttt{1b-VAE} and~\texttt{PGD-G} according to Cosine Similarity.} \label{fig:quant_sign}
\end{center}
\end{figure} 

\bibliographystyle{IEEEtran}
\bibliography{writeups}

\end{document}